\tikzset{global scale/.style={
    scale=#1,
    every node/.append style={scale=#1}
  }
}
\theoremstyle{plain}
\newtheorem{theorem}{Theorem}[section]
\newtheorem{lemma}[theorem]{Lemma}
\theoremstyle{definition}
\theoremstyle{remark}
\def\eqref#1{equation~\ref{#1}}
\def\1{\bm{1}}
\DeclareMathAlphabet{\mathsfit}{\encodingdefault}{\sfdefault}{m}{sl}
\SetMathAlphabet{\mathsfit}{bold}{\encodingdefault}{\sfdefault}{bx}{n}
\def\gD{{\mathcal{D}}}
\def\gG{{\mathcal{G}}}
\def\gL{{\mathcal{L}}}
\def\gP{{\mathcal{P}}}
\newcommand{\E}{\mathbb{E}}
\newcommand{\R}{\mathbb{R}}
\renewcommand{\epsilon}{\varepsilon}
\newcommand{\bA}{\mathbf{A}}
\newcommand{\bG}{\mathbf{G}}
\newcommand{\bH}{\mathbf{H}}
\newcommand{\bK}{\mathbf{K}}
\newcommand{\bM}{\mathbf{M}}
\newcommand{\bQ}{\mathbf{Q}}
\newcommand{\bV}{\mathbf{V}}
\newcommand{\bW}{\mathbf{W}}
\icmltitlerunning{A Theoretical Comparison Between Autoregressive and Masked Pretraining}
\begin{document}

\twocolumn[
\icmltitle{Look Ahead or Look Around?\\
A Theoretical Comparison Between Autoregressive and Masked Pretraining
           }

\icmlsetsymbol{equal}{*}

\begin{icmlauthorlist}
\icmlauthor{Qi Zhang}{equal,yyy}
\icmlauthor{Tianqi Du}{equal,yyy}
\icmlauthor{Haotian Huang}{comp}
\icmlauthor{Yifei Wang}{sch}
\icmlauthor{Yisen Wang}{yyy,ai}
\end{icmlauthorlist}

\icmlaffiliation{yyy}{National Key Lab of General Artificial Intelligence, School of Intelligence Science and Technology, Peking University, China}
\icmlaffiliation{comp}{Sun Yat-Sen University,  China}
\icmlaffiliation{sch}{MIT CSAIL, USA}
\icmlaffiliation{ai}{Institute for Artificial Intelligence, Peking University, China}

\icmlcorrespondingauthor{Yisen Wang}{yisen.wang@pku.edu.cn}

\icmlkeywords{Machine Learning, ICML}

\vskip 0.3in
]

\printAffiliationsAndNotice{\icmlEqualContribution} %

\begin{abstract}
    In recent years, the rise of generative self-supervised learning (SSL) paradigms has exhibited impressive performance across visual, language, and multi-modal domains. While the varied designs of generative SSL objectives lead to distinct properties in downstream tasks, a theoretical understanding of these differences remains largely unexplored. In this paper, we establish the first theoretical comparisons between two leading generative SSL paradigms: autoregressive SSL and masked SSL. Through establishing theoretical frameworks, we elucidate the strengths and limitations of autoregressive and masked SSL within the primary evaluation tasks of classification and content generation. Our findings demonstrate that in classification tasks, the flexibility of targeted tokens in masked SSL fosters more inter-sample connections compared to the fixed position of target tokens in autoregressive SSL, which yields superior clustering performance. In content generation tasks, the misalignment between the flexible lengths of test samples and the fixed length of unmasked texts in masked SSL (vs. flexible lengths of conditional texts in autoregressive SSL) hinders its generation performance. To leverage each other's strengths and mitigate weaknesses, we propose diversity-enhanced autoregressive and variable-length masked objectives, which substantially improve the classification performance of autoregressive SSL and the generation performance of masked SSL. Code is available at \url{https://github.com/PKU-ML/LookAheadLookAround}.

\end{abstract}

\section{Introduction}

Recently, self-supervised learning (SSL) paradigms have emerged to be promising in a wide variety of domains \citep{mae, gpt,bert,clip}. 
Generative SSL, characterized by its ability to reconstruct target tokens from conditional tokens within the same sequence, stands out for its distinct advantages in fine-tuning \citep{mae}, zero-shot learning \citep{gpt3,clip}, and many other downstream tasks. 
In general, generative SSL can be divided into two categories: autoregressive SSL (represented by GPT \citep{gpt}) and masked SSL (represented by BERT \citep{bert}). As shown in Figure \ref{fig:two_paradigms}, the key difference between them lies in the choice of target tokens. Taking the language models as an example, autoregressive SSL predicts the next word in a sentence based on preceding words, while masked SSL predicts randomly masked words in a sentence based on bidirectional words. As a result, these two generative SSL models demonstrate significant differences in many aspects. For example, on two primary evaluation tasks of pretrained models, i.e., classification and content generation tasks, masked SSL tends to perform better in classification tasks, whereas autoregressive SSL shows superior performance in content generation tasks (Table \ref{tab:cls_and_gen}). Despite empirical evidence supporting their differing efficacies, the underlying reasons explaining the distinct properties of autoregressive and masked SSL remain unclear.

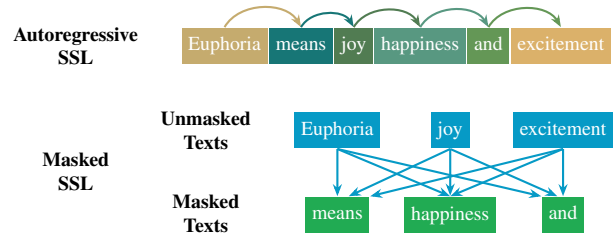
\begin{figure}
    \centering
    \begin{tikzpicture}[global scale=0.75]
    \definecolor{c1}{RGB}{197,171,110}
    \definecolor{c2}{RGB}{23,118,118}
    \definecolor{c3}{RGB}{81,124,82}
    \definecolor{c4}{RGB}{88,152,129}
    \definecolor{c5}{RGB}{100,151,86}
    \definecolor{c6}{RGB}{219,177,106}
    \definecolor{c11}{RGB}{6,154,198}
    \definecolor{c21}{RGB}{33,171,83}
    \node[thick,fill=c1,text=white, minimum height=4ex](t1) at (0,0){Euphoria};
    \node[thick,fill=c2,text=white, right=0 and 0 of t1,minimum height=4ex](t2) {means};
    \node[thick,fill=c3,text=white, right=0 and 0 of t2,minimum height=4ex](t3) {joy};
    \node[thick,fill=c4,text=white, right=0 and 0 of t3,minimum height=4ex](t4) {happiness};
    \node[thick,fill=c5,text=white, right=0 and 0 of t4,minimum height=4ex,text height=1ex](t5) {and};
    \node[thick,fill=c6,text=white, right=0 and 0 of t5,minimum height=4ex,text height=1ex](t6) {excitement};
    \node[left=0 and 3ex of t1, align=center,font=\bf](ar) {Autoregressive\\SSL};
    \node[below=2.8em and 0 of ar, align=center,font=\bf](mask) {Masked\\SSL};
    \node[thick,fill=c11,text=white, minimum height=4ex](t11) at (2,-1.5) {Euphoria};
    \node[thick,fill=c21,text=white, minimum height=4ex](t21) at (2,-3){means};
    \node[thick,fill=c11,text=white, minimum height=4ex](t31) at (4,-1.5){joy};
    \node[thick,fill=c21,text=white, minimum height=4ex](t41) at (4,-3) {happiness};
    \node[thick,fill=c21,text=white, minimum height=4ex,text height=1ex](t51) at (6,-3) {and};
    \node[thick,fill=c11,text=white, minimum height=4ex,text height=1ex](t61) at (6,-1.5){excitement};
    \node[left=0 and 3ex of t11, align=center,font=\bf](ut) {Unmasked\\Texts};
    \node[below=1.2em and 0 of ut, align=center,font=\bf]{Masked\\Texts};
    \draw[arrows = {-Stealth[scale=0.7]},thick,draw=c1] (t1.north) to[out=60,in=120] (t2.north);
    \draw[arrows = {-Stealth[scale=0.7]},thick,draw=c2] (t2.north) to[out=60,in=120] (t3.north);
    \draw[arrows = {-Stealth[scale=0.7]},thick,draw=c3] (t3.north) to[out=60,in=120] (t4.north);
    \draw[arrows = {-Stealth[scale=0.7]},thick,draw=c4] (t4.north) to[out=60,in=120] (t5.north);
    \draw[arrows = {-Stealth[scale=0.7]},thick,draw=c5] (t5.north) to[out=60,in=120] (t6.north);
    \draw[arrows = {-Stealth[scale=0.7]},thick,draw=c11] (t11.south) to (t21.north);
    \draw[arrows = {-Stealth[scale=0.7]},thick,draw=c11] (t11.south) to (t41.north);
    \draw[arrows = {-Stealth[scale=0.7]},thick,draw=c11] (t11.south) to (t51.north west);
    \draw[arrows = {-Stealth[scale=0.7]},thick,draw=c11] (t31.south) to ([xshift=2mm]t21.north);
    \draw[arrows = {-Stealth[scale=0.7]},thick,draw=c11] (t31.south) to ([yshift=0.1mm]t41.north);
    \draw[arrows = {-Stealth[scale=0.7]},thick,draw=c11] (t31.south) to ([xshift=-1mm]t51.north);
    \draw[arrows = {-Stealth[scale=0.7]},thick,draw=c11] (t61.south) to (t21.north east);
    \draw[arrows = {-Stealth[scale=0.7]},thick,draw=c11] (t61.south) to (t41.north);
    \draw[arrows = {-Stealth[scale=0.7]},thick,draw=c11] (t61.south) to (t51.north);
    \end{tikzpicture}
    \caption{Illustration of two primary paradigms in generative SSL: autoregressive SSL and masked SSL.}
    \label{fig:two_paradigms}
    \vskip -0.2in
\end{figure}

In this paper, we propose the first theoretical comparison between autoregressive and masked SSL, with a  focus on classification and content generation tasks as widely considered in the literature \citep{gpt,bert,dong2019unified}. To analyze the classification performance of autoregressive and masked SSL uniformly, we establish the unified connections between generative SSL objectives and matrix decomposition objectives. Based on these connections, we establish a theoretical downstream guarantee for generative SSL and describe the downstream classification performance by the properties of generative SSL co-occurrence matrices. From this perspective, we conduct both empirical and theoretical comparisons of autoregressive and masked SSL. Our analysis demonstrates that randomly selected target tokens in masked SSL could lead to more connections between different semantically similar samples, leading to superior downstream classification performance compared with autoregressive SSL. To bridge this gap, for autoregressive SSL we propose a diversity-enhanced autoregressive objective, which improves its classification performance in real-world datasets.

\begin{table}
        \centering
        \caption{The autoregressive and masked SSL models with similar scales of parameters show distinct properties in classification (evaluated by GLUE score \citep{wang2018glue}) and content generation tasks (evaluated by perplexity on WikiText-2 \citep{merity2016wikitext}). More details can be found in Appendix \ref{appendix-comparison-detail}.}
        \vskip 0.1in
        \begin{tabular}{lcc}
        \toprule
        Pretraining & GLUE ($\uparrow$) & Perplexity ($\downarrow$) \\
            \midrule
            Autoregressive & 72.8 & \textbf{22.8}\\
            Masked  & \textbf{80.5} & 34.1\\
        \bottomrule
        \end{tabular}
        \label{tab:cls_and_gen}
\end{table}

On content generation tasks \citep{dong2019unified}, we first compare the generation ability of masked and autoregressive SSL on different sample lengths.  
We observe that the masked SSL exhibits particularly inferior performance when generating short texts. Intuitively, it is attributed to a misalignment between the pretraining and downstream objectives. For instance, in a question-answering scenario, the length of questions is variable while the length of masked texts in the masked SSL objective is fixed. As a result, the model cannot find the specific part of the information that is used to infer the masked texts during the pretraining process and may output a random guess. To formalize this intuition, we theoretically compare autoregressive and masked SSL generation abilities in the sample with different lengths. We prove that the performance gap is decided by the degree of the misalignment between pretraining objectives and downstream evaluation tasks. Building on this analysis, for masked SSL we introduce variable-length masked objectives, which exhibit significant improvements in its content generation performance.

We summarize our contributions as follows:
\begin{itemize}
    \item We establish the first unified theoretical framework to compare the downstream classification performance of autoregressive and masked SSL, showcasing the unique advantage of masked SSL in enhancing sample connectivity.
    
    \item We build a theoretical comparison in generation abilities between autoregressive and masked SSL, which reveals that masked SSL's underperformance is attributed to the misalignment between the pretraining and downstream objectives. 
    \item To further verify the theoretical insights, we propose the diversity-enhanced autoregressive and variable-length masked objectives, which substantially improve the classification performance of autoregressive SSL and the generation performance of masked SSL.
\end{itemize}

\section{Related Work}

\textbf{Self-supervised Learning.} Traditional deep learning relies on label information to cluster semantically similar samples in the feature space \cite{alexnet}. However, due to the expensive costs of labeled data, the performance of deep learning is constrained. As a result, self-supervised (SSL)  paradigms are proposed to extract meaningful representations from unlabeled data. Among various self-supervised objectives (e.g., generative SSL 
objectives \citep{bert,mae}, contrastive SSL objectives \citep{simclr,giorgi2020declutr} and context-based SSL objectives \citep{larsson2016learning,gidaris2018unsupervised}), generative paradigms have achieved impressive performance in vision, language, and multi-modal domains.

\textbf{Generative SSL.} Generative SSL objectives can be broadly categorized into two main types: autoregressive SSL (predicting the next token of the prefix information) and masked SSL (predicting random tokens with the bidirectional information). In language models, the autoregressive representative model GPT \citep{gpt} and its variants \citep{radford2019language,gpt3} obtain promising performance in content generation, long-text comprehension, and many other downstream tasks. Concurrently, the masked SSL representative language model BERT \citep{bert} and its variants \citep{lan2019albert,liu2019roberta} also show unique advantages and achieve remarkable performance in downstream tasks including sequence labeling, text classification, etc. Besides the language tasks, the generative SSL objectives have also shown great potential in visual representation learning \citep{mae,xie2022simmim} and multi-modal representation learning \citep{bachmann2022multimae}. 

\textbf{Theoretical Understanding of SSL.} Despite the impressive empirical performance exhibited by various self-supervised objectives in diverse downstream tasks, the theoretical understanding of the mechanisms behind them (especially for the generative SSL objectives) is still under-explored. In the language domain, previous studies mainly demonstrated the advantages of different pretraining objectives through empirical comparisons. For example, \citet{wang2022language} shows that the autoregressive SSL models can achieve superior performance in zero-shot tasks, while the masked SSL models perform better in fine-tuning tasks. Besides, \citet{dong2019unified} observes that the autoregressive SSL is more proficient in generation tasks, while the masked SSL outperforms in classification tasks. In the vision domain, researchers mainly focus on the theoretical analysis of contrastive objectives instead of generative SSL objectives. For example, they have established the theoretical connections between the contrastive objectives and the downstream performance \citep{arora,haochen} or analyzed them from a mutual information perspective \citep{oord2018representation, InfoMin}. Different from prior studies, our work mainly focuses on uncovering the mechanism behind the generative SSL objectives by theoretically comparing the two major generative SSL objectives.

\section{Mathematical Formulation}

We begin by introducing the basic notations of autoregressive and masked SSL. Without loss of generality, we take the language models as an example, and the analysis can be easily extended to visual and other domains. In general, both autoregressive and masked SSL comprise two stages: self-supervised pretraining and downstream evaluation. 

\subsection{Pretraining Objectives} 
Taking an unsupervised language dataset $\gD = \{x_i\}_{i=1}^N$ as an example, each sample $x_i=(x_{i,1},\cdots, x_{i,s})$ consists of $s$ tokens. The objective of autoregressive SSL is to predict the subsequent word based on prefix texts, i.e., minimizing the following Negative Log-Likelihood (NLL) Loss:
\begin{align*}
    \gL_{ar}(\Theta) = -\E_{x_i}\sum_{k}\log P(x_{i,k}|x_{i,1},\cdots,x_{i,k-1};\Theta),
\label{eqn:ar loss}
\end{align*}
where  $\Theta$ are the weights of the neural network.

During the masked SSL pretraining process, for each sample $x_i = (x_{i,1},\cdots, x_{i,s})$, we draw a random mask $m \in \{0,1\}^s$ (drawing 0 with probability $\rho_m$, i.e., the mask ratio) and apply the mask to generate the unmasked text $x^1_{i}$ and masked text $x^2_{i}$:
\begin{align*}
    x^1_{i} = x_i[m]\in \R^{s(1-\rho_m) }, x^2_{i} = x_{i}[1-m]\in \R^{s\rho_m }.
\end{align*}
The objective of masked SSL is to predict the masked tokens from unmasked texts, i.e.,
\begin{small}
\begin{align*}
    \gL_{m}(\Theta) = -\E_{(x_{i}^1,x_{i}^2)}\sum_{k}\log P(x^2_{i,k}|x^1_{i,1},\cdots,x^1_{i,s(1-\rho_m)};\Theta),
\end{align*}
\end{small}
\vskip -0.2in

where $x^1_{i,k},x^2_{i,k}$ respectively denote the $k$-th token of the unmasked text $x^1_{i}$ and masked text $x^2_{i}$. 

In practice, both the representative autoregressive SSL model GPT \citep{gpt} and the representative masked SSL model BERT \citep{bert} use the multi-layer Transformer to generate the predicted distribution and apply the Cross-Entropy (CE) loss. Let $f(X_i)\in \R^{t}$ be the output of the network and $W\in \R^{t} \times \R^{N_D}$ be the token embedding matrix that transforms the output features to the predicted distribution of $N_D$ different tokens. Then the predicted probability in autoregressive and masked SSL can be uniformly reformulated as:
\begin{align*}
 P(X_i^+|X_i;\Theta) = \frac{\exp((Wf(X_i))^\top\mathbbm{1}_{X_i^+})}{\sum_{X_i^-} {\exp((W f(X_i))^\top\mathbbm{1}_{X_i^-}}) },   
\end{align*}
where $X_i$ are the conditional tokens, $X_i^+$ are the target tokens and $X_i^-$ are the independent tokens. 
To be specific, taking the sentence in Figure \ref{fig:two_paradigms} as an example ("Euphoria means joy, happiness and excitement"), for autoregressive SSL, $X_i$ represents the prefix tokens (e.g., [Euphoria, means, joy, happiness, and]) and $X_i^+$ is the subsequent token ([excitement]). For masked SSL, $X_i$ represents the unmasked tokens (e.g., [Euphoria,  joy, excitement]) while $X_i^+$ represents one of the masked tokens (e.g., [happiness]). Besides, we use a one-hot vector $\mathbbm{1}_{X_i^+}$ to indicate the target tokens (the $X_{i}^+$-th position of $\mathbbm{1}_{X_i^+}$ is set to 1 ).
The corresponding objective is:
\vskip -20 pt
\begin{footnotesize}
\begin{align*}
    \gL_{CE}(f,W) =  -\E_{(X_i,X_i^+)}\log \frac{\exp((Wf(X_i))^\top\mathbbm{1}_{X_i^+})}{\sum_{X_i^-} {\exp((W f(X_i))^\top\mathbbm{1}_{X_i^-}}) },
\end{align*}
\end{footnotesize}
\vskip -0.1in

We note that the main objective of generative SSL loss is to maximize the probability of target words and minimize the independent word probability. Following the same spirits, 
we consider the following spectral loss \citep{haochen} to simplify our theoretical analysis:
\begin{equation}
\begin{aligned}
    \gL_{SL} (f,W)
    = &-\E_{(X_i,X_i^+)}  (Wf(X_i))^\top\mathbbm{1}_{X_i^+}\\ &+\E_{X_i,X_i^-} ((Wf(X_i))^\top \mathbbm{1}_{X_i^-})^2.
\end{aligned}
\end{equation}

\subsection{Revisiting Objectives from a Matrix Perspective}
When theoretically comparing the generalization performance of autoregressive and masked SSL pretraining, it's challenging to analyze the overall optimal solutions for their objectives as both of them are usually seen as instance-level tasks, which hinders us from demonstrating the downstream performance of pretrained models. Inspired by the previous work \citep{haochen} which addressed analogous challenges in contrastive learning, we consider reformulating the spectral loss and analyzing it from a matrix decomposition perspective.

We start by introducing the co-occurrence matrix $A$, which formulates the joint distribution $P_M(X_i,X_i^+)$ of the conditional texts and target texts as a matrix, i.e.,
\begin{align*}
    A_{X_i,X_i^+} = P_M(X_i,X_i^+) \geq 0. 
\end{align*}
And we denote $P_C(X_i)$$=\sum_{X_i^+}P_M(X_i,X_i^+)$, $P_G(X_i^+)=\sum_{X_i}P_M(X_i,X_i^+)$ as the marginal distributions of conditional and target texts. Based on these definitions, we establish a crucial connection: the spectral loss is equivalent to an asymmetric matrix decomposition objective on the co-occurrence matrix.
\begin{theorem}
Let $\bar{A}$ be the normalized co-occurrence matrix, i.e., $\bar{A}_{X_i,X_i^+} = \frac{A_{X_i,X_i^+}}{\sqrt{P_C(X_i)P_G(X_i^+)}}$. Then we obtain
\begin{align*}
    \gL_{SL}(f,W) = \Vert \bar{A} - FW'^\top \Vert^2 + const.
\end{align*}
where the $X_i$-th row of $F$ and the $X_i^+$-th row of $W'$ respectively represents encoded features and token embeddings, i.e., $F_{X_i} = \sqrt{P_C(X_i)}f(X_i)^\top$, $W'_{X_i^+} = \sqrt{P_G(X_i^+)}W_{X_i^+}$. 
\label{thm:equivalence}
\end{theorem}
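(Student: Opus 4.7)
The plan is to expand the squared Frobenius norm $\Vert \bar{A} - FW'^\top \Vert^2$ entrywise and match the three resulting pieces to the two terms of $\gL_{SL}(f,W)$ plus a pretraining-independent constant. Writing out
\[
\Vert \bar{A} - FW'^\top \Vert^2 = \Vert \bar{A} \Vert^2 \;-\; 2\,\langle \bar{A},\,FW'^\top\rangle \;+\; \Vert FW'^\top \Vert^2,
\]
the first piece depends only on the data distribution and will play the role of the additive constant, so the remaining work is to identify the cross term with the linear part of $\gL_{SL}$ and the quadratic term with its squared part.

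For the cross term, I would substitute the definitions $F_{X_i} = \sqrt{P_C(X_i)}\,f(X_i)^\top$ and $W'_{X_i^+} = \sqrt{P_G(X_i^+)}\,W_{X_i^+}$ into $(FW'^\top)_{X_i,X_i^+} = \sqrt{P_C(X_i)P_G(X_i^+)}\,\langle f(X_i), W_{X_i^+}\rangle$, then pair this against $\bar{A}_{X_i,X_i^+}=A_{X_i,X_i^+}/\sqrt{P_C(X_i)P_G(X_i^+)}$. The two square-root factors cancel, leaving $\sum_{X_i,X_i^+} A_{X_i,X_i^+}\langle f(X_i), W_{X_i^+}\rangle$, and since $A_{X_i,X_i^+}=P_M(X_i,X_i^+)$ this is exactly an expectation $\E_{(X_i,X_i^+)}(Wf(X_i))^\top \mathbbm{1}_{X_i^+}$, matching the first summand of $\gL_{SL}$.

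For the quadratic term, I would compute $\Vert FW'^\top\Vert^2 = \sum_{X_i,X_i^+} P_C(X_i)P_G(X_i^+)\langle f(X_i), W_{X_i^+}\rangle^2$. The product $P_C(X_i)P_G(X_i^+)$ is precisely the law of an independent pair drawn from the two marginals, which is the distribution of $(X_i,X_i^-)$ under the spectral-loss convention that negative targets are sampled independently from $P_G$. Rewriting $\langle f(X_i),W_{X_i^+}\rangle = (Wf(X_i))^\top \mathbbm{1}_{X_i^+}$ then gives $\E_{X_i,X_i^-}((Wf(X_i))^\top \mathbbm{1}_{X_i^-})^2$, matching the second summand of $\gL_{SL}$. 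Absorbing $\Vert \bar{A}\Vert^2$ into the constant yields the claimed identity.

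The main obstacle I anticipate is not algebraic but definitional: making the expectation-versus-sum correspondences line up requires pinning down (i) the normalization conventions under which $\sum_{X_i,X_i^+} P_M(X_i,X_i^+)=1$ so that the linear term is a proper expectation, and (ii) that the independent negative $X_i^-$ in $\gL_{SL}$ is indeed drawn from the marginal $P_G$ rather than conditioned on $X_i$; otherwise the factorization $P_C(X_i)P_G(X_i^+)$ would not arise cleanly from $\Vert FW'^\top\Vert^2$. A secondary subtlety is reconciling scalar prefactors between the $-2$ coefficient produced by expanding the norm and the $-1$ coefficient appearing in $\gL_{SL}$; this should be absorbable into an overall rescaling of either $F$ and $W'$ or of $\gL_{SL}$ itself, but the precise bookkeeping needs to be stated to keep the identity exact up to an additive constant.
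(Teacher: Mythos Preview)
Your proposal is correct and follows essentially the same route as the paper: expand the squared Frobenius norm entrywise, substitute the definitions of $F$ and $W'$, and identify the constant, cross, and quadratic pieces with $\Vert\bar A\Vert^2$, the linear expectation over $P_M$, and the squared expectation over the product of marginals. On the $-2$ versus $-1$ issue you flag, the paper's own derivation also produces $-2\,\E_{(X_i,X_i^+)}f(X_i)^\top W_{X_i^+}^\top$ from the cross term and then writes the next line with coefficient $-1$ without comment, so the discrepancy you noticed is present in the paper as well and is not addressed by any explicit rescaling there.
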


\subsection{Downstream Tasks}
Classification tasks and content generation tasks are the most commonly adopted evaluation protocols for pretrained models \citep{gpt,gpt3,dong2019unified,bert,yang2020xlnet,imagegpt,mae}. Therefore, in this paper, we mainly consider the classification tasks and content generation tasks for generality.

\textbf{Linear classification.} For the classification tasks, we assume that the labels of $\gD$ can be accessed, i.e., $\gD = \{ (x_i, y(x_i))\}_{i=1}^N$, where $y(x_i)$ is the ground-truth label of $x_i$. For each sample $x_i$, we first encode it using the pretrained model $f$ and then apply a linear classifier $g$ following that to generate the predicted distribution of different labels. For ease of theoretical analysis, we follow the settings of linear evaluation \citep{arora}, i.e., the pretrained encoder $f$ is frozen during the downstream evaluation process. Then we evaluate the prediction error
\begin{align*}
\mathcal{E}(f) = \E_{(x_i,y(x_i))}P(\hat{y}_i(x_i)\neq y(x_i)),    
\end{align*}
where $ \hat{y}_i(x_i)= {\rm argmax}(g(f(x_i)))$. 

\textbf{Content Generation.}
For the content generation tasks, we consider an unsupervised text dataset $\gD_u = \{x_i\}_{i=1}^M$ that consists of $M$ samples, we still assume that each $x$ consists of $s$ tokens, i.e., $x_i = (x_{i,1},\cdots,x_{i,s})$. The downstream objective is to evaluate the following likelihood with the pretrained model weights:
\begin{equation}
\label{equation-perplexity}
    \gL_{gen}(\Theta) = -\E_{x_i}\sum_{k}\log P(x_{i,k+1}|x_{i,1},\cdots,x_{i,k};\Theta),
\end{equation}
The exponential of this downstream objective is also called as perplexity, which is an important metric to evaluate the language model.

\section{A Theoretical Comparison between Autoregressive and Masked SSL}

\subsection{Generalization on Linear Classification}
In this section, we compare the generalization performance between autoregressive and masked SSL on linear classification tasks. To achieve this, we first establish a unified downstream classification guarantee of generative SSL and then compare autoregressive and masked SSL based on that.
\subsubsection{Downstream Classification Guarantees of Generative SSL}
Theorem \ref{thm:equivalence} establishes the mathematical equivalence between the generative SSL objectives and the asymmetric matrix decomposition objectives. Leveraging the Eckart-Young Theorem \citep{eckart1936approximation}, we can explicitly characterize the optimal solutions of the asymmetric decomposition objectives, which allows us to characterize the ideal features learned with generative SSL objectives. Substituting these pretrained features into the downstream classification objective yields the following theorem, which establishes the downstream guarantees for generative SSL. 
\begin{theorem}
We define the labeling error $\alpha$ as the probability that the conditional texts and the target text have different labels, i.e.,
\begin{align*}
    \alpha = \E_{(X_i,X_i^+)}\mathbbm{1}[y(X_i)\neq y(X_i^+)],
\end{align*}
where $y(\cdot)$ denotes the ground-truth label. Let $f^\star$ be the optimal solutions of $\gL_{SL}(f,W)$, then we obtain
\begin{align*}
    \E_{x_i} (\hat{y}(x_i) \neq y(x_i)) \leq c_1\sum\limits_{j={t+1}}^{N_D}{\sigma_{j}^4} +  c_2\alpha
\end{align*}
where $\sigma^2_{j}$ is the $j$-th largest singular value of the normalized co-occurrence matrix and $c_1,c_2$ are the constants.
\label{thm:generalization guarantee}
\end{theorem}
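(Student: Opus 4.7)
The plan is to turn the downstream classification problem into a question about the top singular subspace of $\bar{A}$, via the equivalence in Theorem~\ref{thm:equivalence}, and then bound the misclassification probability by (i) the labeling error $\alpha$, which measures how often the supervisory signal from the co-occurrence distribution is inconsistent with the true labels, and (ii) a spectral residual that captures how much label information is lost when the features are constrained to rank $t$.

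First, I would characterize $f^{\star}$. By Theorem~\ref{thm:equivalence}, $\gL_{SL}(f,W)$ equals $\|\bar{A} - FW'^{\top}\|_F^2$ up to a constant, and since $F, W'$ have column dimension $t$, the Eckart--Young theorem gives the optimum: if $\bar{A} = U\Sigma V^{\top}$ is the SVD with $\sigma_1 \ge \sigma_2 \ge \cdots$, then $F^{\star}W'^{\star\top} = U_t\Sigma_t V_t^{\top}$, and the optimal feature for $X_i$ is (up to an invertible reparametrization absorbed into the linear head) $f^{\star}(X_i) \propto U_{t,X_i}\Sigma_t^{1/2}/\sqrt{P_C(X_i)}$. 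This gives an explicit expression whose downstream linear-classification behavior I can analyze directly.

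Next, I would construct a linear classifier and bound its error. I take $g$ to be the linear map whose rows are the projections of the one-hot label indicators onto the top-$t$ left singular subspace (this is the population-level ridge-free least squares predictor). The key decomposition I would use is
\begin{align*}
\E_{x_i}\mathbbm{1}[\hat{y}(x_i)\neq y(x_i)] \;\le\; \underbrace{\E_{x_i}\|(I-\Pi_t)\,\vone_{y(x_i)}\|_{\bar{A}}^2}_{\text{spectral residual}} \;+\; \underbrace{c\,\E_{(X_i,X_i^+)}\mathbbm{1}[y(X_i)\neq y(X_i^+)]}_{=\,c\alpha},
\end{align*}
where $\Pi_t$ is the projection onto the top-$t$ right singular subspace of $\bar{A}$ and $\|\cdot\|_{\bar A}$ is the weighted norm induced by $\bar{A}^{\top}\bar{A}$. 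The second term is $\alpha$ by definition of the labeling error. For the first term, I would express the ``predict label from co-occurrence'' map as a rank-$t$ approximation of $\bar{A}\bar{A}^{\top}$, whose Frobenius approximation error equals $\sum_{j>t}\sigma_j^4$ because the singular values of $\bar{A}\bar{A}^{\top}$ are $\{\sigma_j^2\}$; pushing this through Markov's inequality to convert an $\ell_2$ regression error into a $0/1$ classification error yields the claimed $c_1\sum_{j=t+1}^{N_D}\sigma_j^4$ term.

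The step I expect to be the main obstacle is the bridge between the asymmetric matrix factorization and downstream classification error: unlike the symmetric spectral contrastive setting, the co-occurrence matrix here is nonsquare and weighted asymmetrically by $\sqrt{P_C(X_i)P_G(X_i^+)}$, so I have to be careful that the linear head constructed at the downstream stage only sees left singular vectors (indexed by $X_i$), while the labeling error is naturally phrased in terms of pairs $(X_i,X_i^+)$ mixing both sides. The cleanest way I see is to identify the classifier with the map $X_i \mapsto \E[\vone_{y(X_i^+)} \mid X_i]$, expand the prediction error using the $\bar{A}$-weighted inner product, and then split into the spectral-tail term and the labeling-error term; a careful bookkeeping of the weightings $P_C, P_G$ should absorb into the constants $c_1, c_2$.
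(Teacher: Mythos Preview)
Your high-level plan is sound and you correctly identify both the Eckart--Young characterization of $f^\star$ and the appearance of $\sum_{j>t}\sigma_j^4$ via the eigenvalues of $\bar{A}\bar{A}^\top$. However, the paper takes a different and cleaner route that sidesteps precisely the ``main obstacle'' you flag at the end.

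Rather than attacking the asymmetric problem directly, the paper introduces an auxiliary \emph{symmetric} spectral objective on the conditional-text side, built from the distribution $P_T(X_i,X_i') = \E_{X_i^+}[P_M(X_i\mid X_i^+)\,P_M(X_i'\mid X_i^+)]$. A short computation shows that the normalized matrix $\tilde P_T$ of this symmetric problem equals $\bar{A}\bar{A}^\top$ exactly, so its eigenvectors are the left singular vectors of $\bar{A}$ and its eigenvalues are $\sigma_j^2$. The optimal features for this symmetric problem therefore differ from $F^\star$ only by an invertible linear map, and since the downstream head is linear, their linear-probe errors coincide (this is a lemma from \cite{haochen}). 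At that point the paper simply invokes the existing downstream guarantee for symmetric spectral contrastive learning, which already gives $c_1\sum_{j>t}\tilde\sigma_j^2 + c_2\tilde\alpha$ with $\tilde\sigma_j=\sigma_j^2$ and a symmetric labeling error $\tilde\alpha$ on $(X_i,X_i')$ pairs. A one-line triangle-inequality argument then shows $\tilde\alpha \le 2\alpha$, which is how the $(X_i,X_i^+)$-based $\alpha$ enters.

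Your direct approach could in principle be pushed through, but as written the decomposition is not well-typed: you project the label indicator $\vone_{y(x_i)}$ using $\Pi_t$, which you define as the projection onto the top-$t$ \emph{right} singular subspace (indexed by target tokens $X_i^+$), and measure it in a norm induced by $\bar{A}^\top\bar{A}$; but $\vone_{y(x_i)}$ lives in the $r$-dimensional label space, not the $N_D$-dimensional token space, so none of these operations apply to it. Fixing this would require building a label-to-left-singular-subspace map and carefully tracking the $P_C$/$P_G$ weightings, which is exactly the bookkeeping the paper's symmetric reduction avoids. The advantage of the paper's route is modularity: once you recognize $\tilde P_T=\bar{A}\bar{A}^\top$, everything reduces to a black-box citation of the symmetric spectral-contrastive bound.
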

As shown in Theorem \ref{thm:generalization guarantee}, the downstream performance is mainly decided by two critical factors: the labeling error and the singular values of the co-occurrence matrix. In this paper, we follow the assumptions used in \citep{arora,wang2022chaos} that the labeling error between different tokens of the same samples is negligible. Consequently, our primary focus in this paper is on elucidating how the diverse objectives of autoregressive and masked SSL impact the singular values of the co-occurrence matrix.

A common way to understand the singular values is from a graph perspective. More precisely, the co-occurrence matrix $A$ can be viewed as the adjacent matrix of a bipartite graph $\gG$, where the nodes represent the conditional and target texts, and the edge weights denote the joint probability between them. The spectral graph theory \citep{chung1997spectral} states that the smaller singular values of the adjacent matrices correspond to stronger connectivity in the graphs, indicating properties such as fewer disconnected sub-graphs and shorter diameters. Consequently, Theorem \ref{thm:generalization guarantee} suggests that achieving superior downstream classification performance requires enhanced connectivity in the co-occurrence matrix.

This perspective provides insight into the differences between masked and autoregressive SSL models in downstream classification tasks. Intuitively, the multiple positions of target tokens in the masked SSL objectives have the potential to generate more connections than the single position in the autoregressive SSL objectives. In the following, we conduct both empirical and theoretical comparisons of the connectivity in autoregressive and masked co-occurrence matrices.

\subsubsection{Comparing Autoregressive and Masked SSL}

\begin{figure}[t]
 \centering
\includegraphics[width=.35\textwidth]{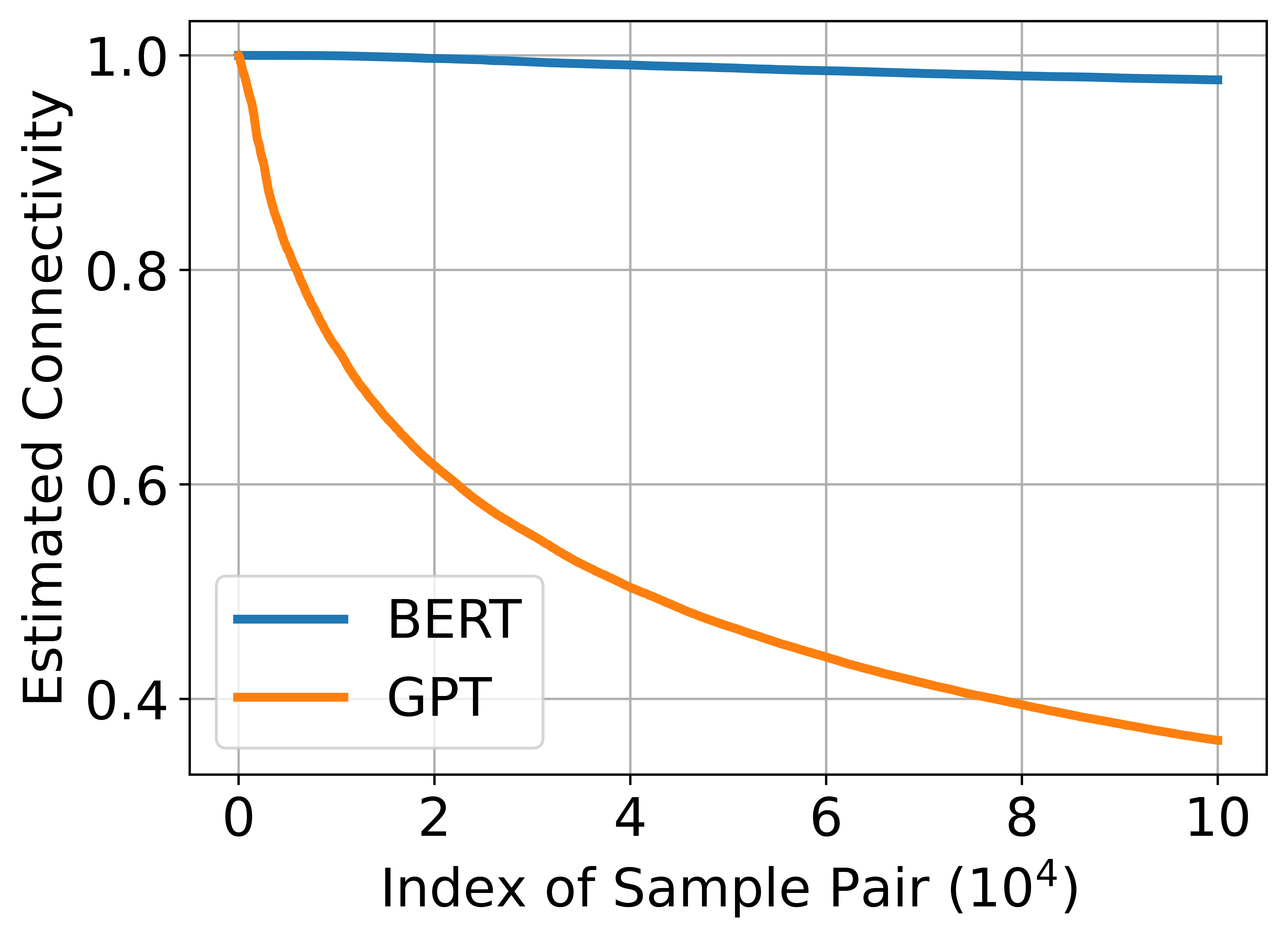}
    \vskip -0.1in
    \caption{Comparisons on estimated connectivity of the co-occurrence matrices of GPT and BERT. Details in Appendix \ref{appendix-estimated-detail}}
    \vskip -0.2in
\label{estimated connectivity}
\end{figure}
Based on the perspective above, we first empirically investigate the connectivity of autoregressive and masked co-occurrence matrices in the real-world dataset PILE\citep{gao2020pile}. As the co-occurrence matrices of the real-world datasets are not accessed, we calculate the average feature similarity between different texts as a surrogate metric for the sample connectivity. To ensure a fair comparison, we pretrain the same scale network with autoregressive and masked SSL objectives respectively. More details can be in Appendix. As shown in Figure \ref{estimated connectivity}, the estimated connectivity of the masked SSL co-occurrence matrices is significantly stronger than that of autoregressive SSL.

Besides, to theoretically compare the connectivity of autoregressive and masked SSL, we construct a toy model and calculate the singular values of the respective co-occurrence matrices.

\textbf{Toy model.} For the dataset $\gD_{sim}=\{(x_i,y(x_i))\}$, we denote that there exist $r$ classes in the dataset, i.e., $y(x_i) \in \{1,\cdots,r\}$. For the $k$-th token in the text $x_i$, we assume that $x_{i,k}$ are uniformly selected from the set $\{k\cdot T\cdot r+y_{i},k\cdot T\cdot r+y_{i}\cdot2,\cdots,k\cdot T\cdot r+y_{i}\cdot T\}$ with $T$ elements.
Then we can explicitly calculate the singular values of the co-occurrence matrix.
\begin{theorem}For the normalized co-occurrence matrix $\bar{A}, {A}'$ of autoregressive and masked SSL on the toy model dataset $\gD_{sim}$, when the length of masked texts $s\rho_m>1$, we obtain
$$
\begin{cases}
\sigma_j =\sigma'_j=1, &j \leq r,\\
\displaystyle \sigma_j =1 >  \sqrt{\frac{s(1-\rho_m)}{(s\rho_m)(s-1)}}=\sigma'_j,& r < j \leq r s, \\
\sigma_j = \sigma'_j=0,& j >rs, \\
\end{cases}
$$
where $\sigma_j,\sigma'_j$ are the $j$-th largest singular values of $\bar{A},\bar{A'}$, $r$ is the number of class, $s$ is the length of a sample and $\rho_m$ is the mask ratio. 
\label{thm:classification toy model}
\end{theorem}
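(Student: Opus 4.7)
The plan is to exploit the heavy symmetry of the toy model. Since the token alphabets at distinct (position, class) pairs are disjoint, tokens in different classes never co-occur, so both $\bar A$ and $\bar A'$ are block-diagonal with $r$ identical class-blocks, and the $1/r$ class-prior factor cancels in the normalisation $P_M/\sqrt{P_CP_G}$. It therefore suffices to compute the spectrum of a single within-class block --- the normalised co-occurrence of a uniform sample from $[T]^s$ --- and take $r$ copies of it.

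For the autoregressive case, I would directly read off the joint and marginal distributions. With target position $k$ drawn uniformly from $[s]$, a length-$(k{-}1)$ prefix $X$, and a position-$k$ target $X^+$, one gets $P_M(X,X^+)=\tfrac{1}{sT^k}$, $P_C(X)=\tfrac{1}{sT^{k-1}}$ and $P_G(X^+)=\tfrac{1}{sT}$. The sub-block of $\bar A$ indexed by target position $k$ is therefore the constant $\tfrac{1}{\sqrt{T^k}}\mathbf{1}_{T^{k-1}}\mathbf{1}_T^{\top}$, a rank-one matrix with unique nonzero singular value $1$. Different $k$ index disjoint rows and disjoint columns, so $\bar A$ is further block-diagonal across $k$ with $s$ such rank-one blocks per class, giving exactly $rs$ singular values equal to $1$.

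For the masked case I would compute $\bar A'^{\top}\bar A'$ directly. Setting $n=s(1-\rho_m)$ and $m=s\rho_m$, the generation process yields $P_M(X,X^+)=\tfrac{1}{\binom{s}{m}\,m\,T^{n+1}}$, $P_C(X)=\tfrac{1}{\binom{s}{m}T^n}$, $P_G(X^+)=\tfrac{1}{sT}$, and the key observation is that $\bar A'_{X,(k,v)}$ is independent of $v$. Summing over admissible rows via $\binom{s-1}{m-1}=\tfrac{m}{s}\binom{s}{m}$ and $\binom{s-2}{m-2}=\tfrac{m(m-1)}{s(s-1)}\binom{s}{m}$ gives
\begin{equation*}
(\bar A'^{\top}\bar A')_{(k_1,v_1),(k_2,v_2)}
=\begin{cases}\tfrac{1}{mT} & k_1=k_2,\\[0.2em]\tfrac{m-1}{(s-1)mT} & k_1\neq k_2,\end{cases}
\end{equation*}
so $\bar A'^{\top}\bar A'=M\otimes J_T$ with $M=\tfrac{s-m}{(s-1)mT}I_s+\tfrac{m-1}{(s-1)mT}J_s$ and $J_T$ the $T\times T$ all-ones matrix. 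Diagonalising each factor gives eigenvalues of $M$ equal to $\tfrac{1}{T}$ (eigenvector $\mathbf{1}_s$) and $\tfrac{s-m}{(s-1)mT}$ (multiplicity $s-1$), and of $J_T$ equal to $T$ and $0$; hence the nonzero eigenvalues of $\bar A'^{\top}\bar A'$ are $1$ (once) and $\tfrac{s(1-\rho_m)}{(s-1)s\rho_m}$ (multiplicity $s-1$). Taking square roots and combining with the $r$-fold class replication matches the announced spectrum.

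The main obstacle is the bookkeeping in the masked case: recognising that the $v$-independence of $\bar A'_{X,(k,v)}$ forces $\bar A'^{\top}\bar A'$ to be a Kronecker product with a rank-one all-ones factor, which collapses a potentially huge eigenproblem to diagonalising the tiny $s\times s$ matrix $M$. The hypothesis $s\rho_m>1$ in the statement is exactly what makes the sum involving $\binom{s-2}{m-2}$ well-defined, and is also what produces the strict inequality $\sigma_j > \sigma'_j$ for $r<j\le rs$.
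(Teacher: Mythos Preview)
Your proposal is correct and follows essentially the same route as the paper: reduce to a single class block by the disjoint-vocabulary observation, show the autoregressive block splits further by target position into rank-one pieces each with singular value $1$, and for the masked case compute $\bar A'^{\top}\bar A'$ explicitly and diagonalise. The only difference is in that last diagonalisation: the paper invokes a block-matrix eigenvalue lemma (diagonal blocks with entries $p_a$, off-diagonal blocks with entries $p_b$), whereas you recognise the cleaner Kronecker factorisation $M\otimes J_T$ and read off the spectrum from the factors --- the two computations are equivalent and give identical numbers.
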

According to Theorem \ref{thm:classification toy model}, the singular values of the masked SSL co-occurrence matrix are smaller than that of the autoregressive SSL, which verifies the intuition that the masked SSL can foster more connections by the multiple-lace predictions. Combined with Theorem \ref{thm:generalization guarantee}, the masked SSL objective has a superior downstream guarantee.  Additionally, it is noteworthy that the singular values of the masked SSL co-occurrence matrix decrease with a larger mask ratio. This observation implies that an aggressive mask ratio can effectively cluster more samples in the feature space, which is consistent with previous empirical findings in generative SSL \citep{mae,bert}.

In summary, both the empirical and theoretical analysis verify that multiple-place word predictions can bring more connections compared to the next word prediction, which suggests that the autoregressive SSL objective should introduce more diverse predictions to strengthen the connectivity between different texts. Naturally, we propose the following diversity-enhanced autoregressive SSL objective:
\vskip-15pt

\begin{small}
\begin{equation}
\begin{aligned}
\label{equation-diverse-auto-objective}
    \gL_{dar,t}(\Theta) 
    =-\E_{x_i}\sum_{k}\log P(x_{i,[k+1,k+t]}|x_{i,1},\cdots,x_{i,k};\Theta),
\end{aligned}
\end{equation}
\end{small}

where $x_{i,[k+1,k+t]}$ is a token randomly selected from $\{x_{i,k+1}, \cdots, x_{i,k+t}\},\ t\geq 1$. This objective lets the conditional sequence $x_{i,1},\cdots,x_{i,k+1}$ additionally predict $t-1$ more subsequent words, which introduces more diverse prediction targets and helps bring more connections. We will verify the effectiveness of this objective in Section \ref{section-experiments}.

\subsection{Generalization on Content Generation}

Besides the classification tasks we analyzed above, another important evaluation task for pretrained models is the content generation ability \citep{gpt,bert}. Previous works have empirically shown that the autoregressive SSL models outperform masked SSL models in downstream content generation tasks \citep{dong2019unified}. To further understand the advantages of autoregressive SSL models in content generation tasks, we first observe the generation ability of masked SSL models with different types of texts.

\begin{figure}
    \centering
\includegraphics[width=.35\textwidth]{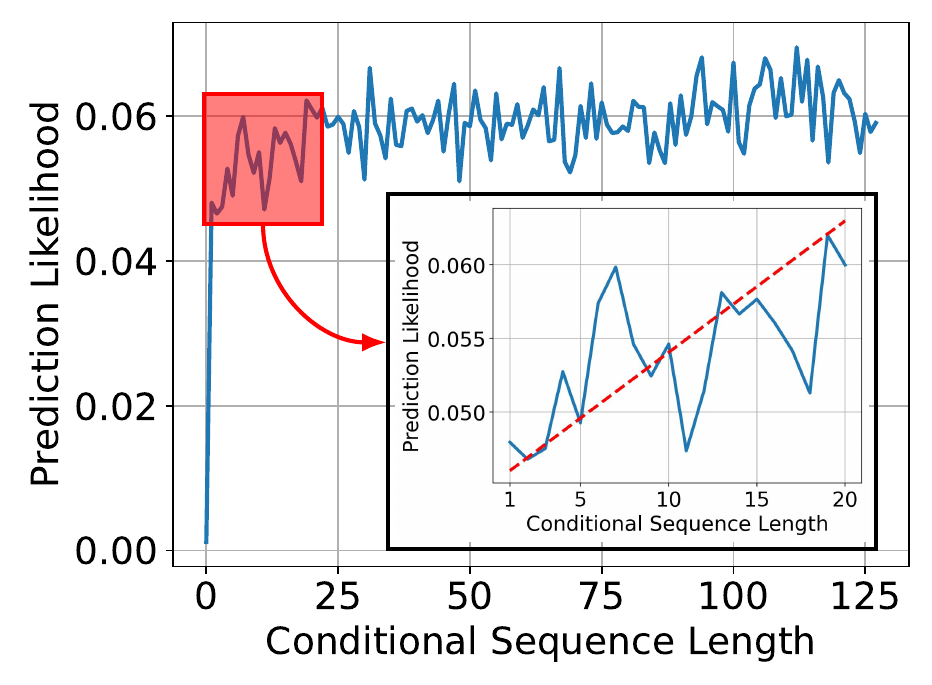}
    \caption{Comparison between different conditional sequence lengths in generation evaluation of masked SSL. Shorter conditional sequences suffer from low prediction likelihood.}
    \vskip -0.2in
    \label{fig:different_lengths}
\end{figure}

As shown in Figure \ref{fig:different_lengths}, we observe that the masked SSL model performs particularly unsatisfactory when generating from short texts. Intuitively, it can be attributed to a misalignment between the pretraining objective and the downstream evaluation tasks. To be specific, when predicting the masked words in the pretraining process, the lengths of inputs are fixed (e.g., 15\% of the texts). Consequently, the masked SSL model may struggle to accurately infer the complete texts due to limited information in the downstream generation tasks. In the subsequent analysis, we aim to theoretically substantiate this intuition. For the ease of our theoretical analysis, we consider using the linear attention as the pretrained encoder to compare the generation abilities of autoregressive and masked SSL models.

\textbf{Linear Attention.} The general form of linear attention is given by:
\begin{equation*}
f(x_{i})=x_{i}W^{Q}(x_{i}W^{K})^{T}x_{i}W^{V}=QK^{T}V,
\end{equation*}
where $W^{Q},W^{K},W^{V}$ are projections.

By comparing the evaluation error on the downstream generation tasks, we obtain the following theorem.
\begin{theorem}
Let $f_{mask}$ be the model pretrained by masked SSL. We establish the upper bound of masked SSL in content generation tasks:
\begin{align*}
\mathcal{L}_{gen}(f_{mask})\leq \frac{\sum_{k}\left(\frac{w_{k}^{2}}{(k-1)^{6}}+w_{k}\left\lVert W \right\rVert _{2}^2 \eta\right)}{2s(1-\rho _{m})}+\delta _{mask} + 1,
\end{align*}
where $s$ is the length of a sequence, $\rho_m$ is the mask ratio, $w_{k}=((s(1-\rho _{m}))^{3}-(k-1)^{3})$ is the misalignment degree of input lengths, $\eta = \max\Vert x_{i,a}W^{Q}(x_{i,b}W^{K})^{\top }x_{i,c}W^{V}-x_{i,\alpha}W^{Q}(x_{i,\beta}W^{K})^{\top }x_{i,\gamma}W^{V}\Vert$ represents the difference between pretrained model outputs of different positions, and $\delta_{mask}=\max(-(Wf(X_i))^\top\mathbbm{1}_{X_i^+} + ((Wf(X_i))^\top \mathbbm{1}_{X_i^-})^2)$ represents the pretraining error of masked SSL.

For the autoregressive SSL, we obtain
\begin{align*}
\mathcal{L}_{gen}(f_{ar}) \leq \delta_{ar}.
\end{align*}
 \textbf{Comparing two upper bounds, the gap between masked and autoregressive SSL is:}
\begin{equation}
\frac{\sum_{k}^{}\left(\frac{w_{k}^{2}}{(k-1)^{6}}+w_{k}\left\lVert W \right\rVert _{2}^2 \eta\right)}{2s(1-\rho _{m})}+1+\delta_{mask}-\delta_{ar}.
\label{eqn:generation gap}
\end{equation}
Consequently, autoregressive SSL obtains a smaller error when the pretraining errors are negligible.
\label{thm:length mismatch}
\end{theorem}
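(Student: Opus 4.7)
Both bounds rest on the same decomposition: the downstream generation loss equals the pretraining surrogate loss attained by the model on the generation input, plus a correction term measuring the mismatch between the distribution on which the model was trained and the distribution to which it is applied at test time. For autoregressive pretraining, which already predicts token $k$ from the prefix $x_{i,1},\ldots,x_{i,k-1}$, this mismatch is identically zero, so $\mathcal{L}_{gen}(f_{ar})\le \delta_{ar}$ is immediate from the definition of $\delta_{ar}$ as the pretraining surrogate error. All the remaining work concerns the masked case.

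\textbf{Reducing the log-likelihood to the spectral surrogate.} My first step for masked SSL is to replace the cross-entropy in (\ref{equation-perplexity}) by the quadratic surrogate of the spectral loss (whose equivalence to the matrix decomposition objective was established in Theorem~\ref{thm:equivalence}). Applying $-\log z \le 1-z$ to the softmax denominator yields
\begin{equation*}
-\log P(X_i^+\mid X_i;\Theta) \le 1 - (Wf(X_i))^\top \mathbbm{1}_{X_i^+} + \sum_{X_i^-}\big((Wf(X_i))^\top \mathbbm{1}_{X_i^-}\big)^2,
\end{equation*}
which accounts for the additive $+1$ in the final bound and reduces the task to controlling the spectral-form loss of $f_{mask}$ on generation-style prefixes.

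\textbf{Quantifying the length misalignment for linear attention.} I would write $f(x_{1:n}) = \sum_{a,b,c\le n} x_a W^Q (x_b W^K)^\top x_c W^V$ as a sum over the three position indices carried by the three factors of $x$ in the definition of linear attention. Moving from the pretraining input length $n_0 = s(1-\rho_m)$ to the generation prefix length $k-1$ removes exactly $w_k = n_0^3 - (k-1)^3$ such triples. By the definition of $\eta$ each missing triple is bounded in norm by $\eta$, and passing from feature space to logits contributes the factor $\lVert W\rVert_2^2$. Plugging this pointwise discrepancy into the surrogate of the previous step, the linear piece produces the contribution $w_k \lVert W\rVert_2^2 \eta$, while the quadratic piece produces $w_k^2/(k-1)^6$ once the natural $1/(k-1)^3$ normalization carried by a prefix of length $k-1$ is factored out. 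Averaging over the $s(1-\rho_m)$ positions for which $f_{mask}$ supplies a meaningful prediction yields the prefactor $1/\big(2s(1-\rho_m)\big)$, and adding the intrinsic pretraining slack $\delta_{mask}$ completes the masked bound. The gap in (\ref{eqn:generation gap}) then follows by direct subtraction of the two displayed inequalities.

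\textbf{Main obstacle.} The decisive step is the third one: matching the exact coefficients $w_k^2/(k-1)^6$ and $w_k\lVert W\rVert_2^2\eta$. This demands careful tracking of how the three-fold scaling of linear attention interacts with a change of input length, a consistent normalization convention between pretraining and evaluation, and a verification that each of the $w_k$ surplus triples is genuinely controlled by the single uniform constant $\eta$ without accumulating further dependence on $n_0$ or $k$. The autoregressive bound, the $-\log z \le 1-z$ reduction, and the averaging step are essentially bookkeeping by comparison.
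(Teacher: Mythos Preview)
Your proposal has the right high-level picture, but the mechanism you describe for producing the specific coefficients does not work, and it diverges from the paper at exactly the delicate step you yourself flag as the main obstacle.

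\textbf{Where the $+1$ and the two misalignment terms actually come from.} In the paper, $\mathcal{L}_{gen}$ is evaluated in its spectral-surrogate form throughout the analysis, so no $-\log z \le 1-z$ reduction is performed; in particular the $+1$ does \emph{not} arise there. Instead the paper handles the two pieces of the surrogate separately: the negative-sample quadratic term $((Wf_{mask}(X_{<k}^i))^\top \mathbbm{1}_{(X_k^i)^-})^2$ is bounded by $1$ via Cauchy--Schwarz together with the normalization $\lVert Wf_{mask}\rVert_2=1$; this is the source of the $+1$. Both $w_k^2/(k-1)^6$ and $w_k\lVert W\rVert_2^2\eta$ come from the \emph{linear} piece $-(Wf_{mask}(X_{<k}^i))^\top \mathbbm{1}_{(X_k^i)^+}$, after rewriting it via the polarization identity as $\tfrac12\lVert Wf_{mask}(X_{<k}^i)-\mathbbm{1}_{(X_k^i)^+}\rVert_2^2-1$ and inserting an auxiliary input $X_{m_k}^i$ of the pretraining length $s(1-\rho_m)$ that contains the prefix $X_{<k}^i$. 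A triangle-type inequality then splits this into the pretraining slack $1+\delta_{mask}$ and the feature discrepancy $\tfrac12\lVert W(f_{mask}(X_{m_k}^i)-f_{mask}(X_{<k}^i))\rVert_2^2$. Your attribution (quadratic piece $\to w_k^2/(k-1)^6$, linear piece $\to w_k\lVert W\rVert_2^2\eta$) is inverted and would not recover the stated constants.

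\textbf{The key technical gap.} You write that ``by the definition of $\eta$ each missing triple is bounded in norm by $\eta$.'' This is the crux and it is not correct: $\eta$ bounds \emph{differences} $g(x_u,x_v,x_w)-g(x_a,x_b,x_c)$, not the individual triples $g(x_u,x_v,x_w)$ that make up $f_{mask}(X_{m_k}^i)-f_{mask}(X_{<k}^i)$. The paper's device is to write each of the $w_k$ extra triples as $g(x_a,x_b,x_c)+\varepsilon^{u,v,w}_{a,b,c}$ for \emph{every} $(a,b,c)\in (X_{<k}^i)^3$ and then average over those $(k-1)^3$ choices. The averaged first parts rebuild $\frac{w_k}{(k-1)^3}f_{mask}(X_{<k}^i)$, whose image under $W$ has norm $\frac{w_k}{(k-1)^3}$ by the normalization $\lVert Wf_{mask}\rVert_2=1$; squaring gives $w_k^2/(k-1)^6$. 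The averaged residual has norm at most $w_k\eta$, yielding the $w_k\lVert W\rVert_2^2\eta$ term. Without this averaging trick and the unit-norm assumption, your argument cannot separate the ``scaling'' contribution from the ``position-variation'' contribution, and both coefficients would be off.

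The autoregressive bound and the final averaging over $k$ are, as you say, bookkeeping; it is the bridge via $X_{m_k}^i$, the polarization identity, and the averaging trick that you are missing.
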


As stated in Theorem \ref{thm:length mismatch}, besides the canonical pretraining error, the autoregressive SSL models have superior performance guarantees in downstream content generation tasks compared to masked SSL models, and the performance gap is mainly decided by two crucial factors: the alignment of input lengths ($w_k$) and the consistency of different positions ($\eta$). In the following, we respectively discuss these two factors.

\textbf{Alignment of input lengths.}  We note that when the lengths of test samples are close to the length of unmasked texts in the pretraining objective, the performance of masked SSL models can approach autoregressive SSL models, which proves the length misalignment between unmasked texts in pretraining and test examples in downstream is a crucial reason for the inferior performance of masked SSL models. 

\textbf{Consistency of different positions.} The term $\eta$ in Equation (\ref{eqn:generation gap}) evaluates the difference in model output across various positions in a sequence. When the output distribution is more consistent, the performance of the masked SSL model is better. Consequently, it is advantageous to encourage the model to generate predictions based on the entire sentence rather than focusing on specific tokens. Besides, the consistency of different positions offers some other potential benefits. For example, by uniformly considering the semantic information across different positions in a sentence, the model can avoid generating shortcuts and overfit solutions.

In summary, the theoretical bounds in Theorem \ref{thm:length mismatch} provide two principled guidelines to improve the content generation performance of masked SSL: (1) we should try to mitigate the length misalignment between pretraining and downstream examples, and (2) the model predictions should be encouraged to be consistent with different tokens in the same sentences. 

Inspired by the theoretical analysis, predicting the masked texts with different lengths of unmasked texts is a straightforward solution to mitigate the misalignment and improve the generation performance of masked SSL. Consequently, we propose the following variable-length masked objective:
\vspace{-0.1in}
\begin{equation}
\label{equation-variable-length-objective}
\begin{aligned}
&\gL_{vm}(\Theta)\\ &= -\E_{\rho_m}\E_{(x_i^1,x_i^2)}\sum_{k}\log P(x^2_{i,k}|x^1_{i,1},\cdots,x^1_{i,s(1-\rho_m)};\Theta).   
\end{aligned}
\end{equation}
In this objective, the mask ratio $\rho_m$ is not fixed. Instead, it is randomly sampled from a range. The variable mask ratio has the potential to mitigate the length misalignment between pretraining and downstream examples, improving the content generation performance. We will verify its effectiveness in Section \ref{section-experiments}.

\begin{table*}[t]
  \centering
  \caption{GLUE test set results of autoregressive objective and diversity-enhanced autoregressive objective scored with 5 epochs fine-tuning on each test set. We report F1 scores for QQP and MRPC, Pearson correlation coefficient for STS-B, Matthews correlation coefficient for CoLA and accuracy scores for the other tasks.}
  \vskip 0.1in
  \begin{tabular}{lccccccccc}
    \toprule
    Objective & MNLI & SST-2 & STSB & RTE & QNLI & QQP & MRPC & CoLA & \textbf{Avg} \\
    \midrule
    Autoregressive & 78.2/79.3 & \textbf{90.3} & 82.1 & 54.3 & 84.7 & 85.6 & 81.9 & 30.9 & 74.1 \\
    Diversity-enhanced Autoregressive & \textbf{80.5}/\textbf{80.6} & 89.9 & \textbf{83.7} & \textbf{56.4} & \textbf{87.2} & \textbf{86.3} & \textbf{85.1} & \textbf{34.2} & \textbf{76.0} \\
    \bottomrule
  \end{tabular}
  \label{table-diversity-enhanced-language}
  \vspace{-.2in}
\end{table*}

\subsection{Discussion}

In summary, this section introduces the first theoretical comparison between autoregressive and masked SSL. As autoregressive and masked SSL exhibit contrasting advantages in downstream classification and content generation tasks \citep{gpt,bert}, the two most crucial evaluation tasks for pretrained models seem to be contradictory. However, by revealing the limitations of autoregressive SSL in classification tasks (fixed position of target tokens) and limitations of masked SSL in content generation tasks (fixed length of unmasked texts), we note that the limitations of these two paradigms are not contradictory to each other. Instead, they deliver a consistent message: we should encourage diversity (in inputs, prediction objectives, etc) to the generative SSL objectives to improve the generalization performance. As a verification of our analysis, we will show the performance of our proposed objectives across both language and vision tasks in the following sections.

\section{Experiments}
\label{section-experiments}
In this section, we will verify the effectiveness of our new proposed objectives presented in Equation (\ref{equation-diverse-auto-objective}) and Equation (\ref{equation-variable-length-objective}). We will conduct experiments on both vision and language tasks to demonstrate the generality of our methods.

\subsection{Diversity-enhanced Autoregressive Objective Improves Classification Ability}
In this part, we will consider the diversity-enhanced autoregressive objective as proposed in Equation (\ref{equation-diverse-auto-objective}). The diversity-enhanced autoregressive objective is to predict the next $t$ tokens of conditional texts. Inspired by \citep{wang2018semi}, which predicts several tokens simultaneously, we perform group-autoregressive modeling on the sequence to effectively realize this objective. Specifically, the sample sequence $(x_{i,1},\dots,x_{i,s})$ is divided into several consecutive groups $G_{i,1},\dots,G_{i,l}$ in order, with each group containing any number of tokens. The prediction is performed group by group, i.e., within each group, the tokens are predicted in parallel, while across group, the predictions are sequential. In this way, given $G_{i,1},\dots,G_{i,s}$, the model is to predict all tokens from $G_{i,s+1}$. Therefore, by setting $|G_{i,2}|=\dots=|G_{i,l-1}|=t$ and summing $|G_{i,1}|$ from $1$ to $t$, the semi-autoregressive modeling task
\begin{equation}
\label{equation-semi-autoregressive}
    \begin{aligned}
        &\mathcal{L}_{semi,t}(\Theta)\\
        &=-\E_{x_i}\sum_{|G_1|}\sum_{s}\log P(G_{i,s+1}|G_{i,1},\dots,G_{i,s}; \Theta)
    \end{aligned}
\end{equation}
is equivalent to Equation (\ref{equation-diverse-auto-objective}). In our experiments, we set $t=2$ and uniformly sampling $|G_{i,1}|$ from $\{1,2,\dots,t\}$ instead. Since there exist multiple prediction targets for one condition sequence, it is difficult for vanilla Transformer \citep{vaswani2023attention} to model. Therefore, for language tasks, we will leverage and extend the two-stream attention module proposed in XLNet \citep{yang2020xlnet} by designing causal masks for semi-autoregressive modeling, where the two-stream attention can be used to model more various dependency relationship between tokens compared to vanilla Transformer. More details on the realization will be illustrated in Appendix \ref{appendix-two-stream}. Similarly, for vision tasks, we use the two-stream attention version of ViT \citep{dosovitskiy2020image}, which is also adopted in the previous work \citep{hua2022self}.

\begin{table}[t]
\centering
    \caption{Experiment results of autoregressive objective and diversity-enhanced autoregressive objective on image classification tasks. ViT-S(mall) is trained on ImageNet-100 and ViT-B(ase) is trained on ImageNet-1K. LP ACC. refers to linear probing accuracy (\%). FT Acc. refers to fine-tuning accuracy (\%).}
    \vskip 0.1in
        \begin{tabular}{lccc}
        \toprule
        Arch. & Objective & LP Acc. & FT Acc. \\
        \midrule
        \multirow{2}{*}{ViT-S} & Autoregressive & 33.1 & 81.1 \\
            &Diversity-enhanced & \textbf{36.2} & \textbf{83.2} \\
        \midrule
        \multirow{2}{*}{ViT-B} & Autoregressive & 56.2 & 82.5\\
            &Diversity-enhanced & \textbf{59.4} & \textbf{82.9} \\
        \bottomrule
        \end{tabular}
        \label{table-diversity-enhanced-vision}
\end{table}

\textbf{Language Tasks.} The model is pretrained on the Pile dataset \citep{gao2020pile}, which contains content from 22 diverse sources. We use a decoder-only Transformer with 16 layers and hidden size of 768. We train the model for 100K steps with batch size of 8192. The other pretraining procedure follows the protocol proposed by Cramming \citep{geiping2023cramming}. The model is then finetuned with 5 epochs and evaluated on the General Language Understanding Evaluation (GLUE) benchmark \citep{wang2018glue}, which is a collection of several language understanding tasks.

Table \ref{table-diversity-enhanced-language} presents the GLUE test results. These results reveal that the model trained with the diversity-enhanced autoregressive objective consistently exhibits improvement in classification tasks. Our proposed objective has a gain of 1.9\% on the average score and 3.3\% on the hardest task CoLA. These improvements support the effectiveness of our theoretical findings regarding the classification ability of autoregressive models.

\textbf{Vision Tasks.} The model is pretrained on ImageNet-1K with ViT-Base and ImageNet-100 with ViT-Small \citep{imagenet}. The training epoch is 200 with a warm-up of 10 epochs. The batch size is set to 4096 following previous works \citep{mae,hua2022self,xie2022simmim}. We set the base learning rate as 5e-4 and use Adam optimizer \citep{kingma2014adam}. After the pretraining is finished, we
perform downstream tasks under two protocols: For linear probing, we train a linear classifier on the frozen pretrained encoder for 90 epochs; For non-linear fine-tuning, we train both the pretrained encoder and the linear classifier with the soft target cross entropy loss \citep{peterson2019human} for 100 epochs. The batch size is set to 4096.

The experiment results are presented in Table \ref{table-diversity-enhanced-vision}. The diversity-enhanced autoregressive objective improves linear probing accuracy by $\sim$3\%. It also improves fine-tuning accuracy on ViT-S by 2.1\% and on ViT-B by 0.4\%. These positive outcomes affirm the effectiveness of our proposed objective, emphasizing our theoretical insight that it is beneficial to have more connections in generative SSL.

We also conduct various extensive experiments on vision tasks including few-shot learning as \citet{dubois2023evaluating}, multi-epoch training as \citet{li2023architecture} and transfer learning as \citet{kong2023understanding}.

\textbf{Few-shot learning.} We conduct experiments with the diversity-enhanced autoregressive objective. In the stage of the downstream tasks, we only use 1\% of the label. Finetuning and linear probing results are shown in Table \ref{tab:few-shot}. As shown in the above table, the diversity-enhanced autoregressive objective improves both the few-shot linear and finetuning accuracy, which further verifies the effectiveness of our proposed objectives across different tasks.
\vspace{-0.2in}
\begin{table}[H]
    \centering
    \caption{Few-shot evaluation accuracy (\%) on ImageNet-1K with ViT Base. Only 10\% of the training data is given during training.}
    \vspace{0.1in}
    \begin{tabular}{lcc}
    \toprule
        Objective & LP Acc. & FT Acc. \\
    \midrule
        Autoregressive & 41.2 & 70.2 \\
        Diversity-enchanced & \textbf{45.3} & \textbf{71.1} \\
    \bottomrule
    \end{tabular}
    \label{tab:few-shot}
    \vspace{-0.1in}
\end{table}

\textbf{Multi-epoch training.} We conduct experiments with the diversity-enhanced autoregressive objective. We respectively pretrain the models with 100 epochs and 200 epochs. Finetuning and linear probing results are shown in Table \ref{tab:multi-row}. As shown in the above table, the diversity-enhanced autoregressive models with different training epochs consistently achieve significant improvement in classification tasks, which also verifies the effectiveness of our proposed objectives.
\vspace{-0.2in}
\begin{table}[H]
    \centering
    \caption{Test accuracy (\%) with multi-epoch training on ImageNet-1K with ViT Base.}
    \vspace{0.1in}
    \begin{tabular}{clcc}
    \toprule
        Epochs & Objective & LP Acc. & FT Acc. \\
    \midrule
        \multirow{2}{*}{100} & Autoregressive & 48.2 & 81.4 \\
        &Diversity-enchanced & \textbf{52.6} & \textbf{82.1} \\
    \midrule
        \multirow{2}{*}{200} & Autoregressive & 56.2 & 82.5 \\
        &Diversity-enchanced & \textbf{59.4} & \textbf{82.9} \\
    \bottomrule    
    \end{tabular} 
    \label{tab:multi-row}
    \vspace{-0.1in}
\end{table}

\textbf{Transfer learning.} We conduct experiments with the diversity-enhanced autoregressive objective. We evaluate the transfer learning performance pretrained on ImageNet-1K on 9 downstream datasets as \citet{zhao2022arcl} and \citet{ericsson2021well}, which are FGVC Aircraft, Caltech-101, Stanford Cars, CIFAR10, CIFAR100, DTD, Oxford 102 Flowers, Food-101 and Oxford-IIIT Pets. For linear evaluation, multinomial logistic regression is fit on the extracted features. Results are shown in Table \ref{tab:transfer}. As shown in the table, the diversity-enhanced autoregressive models also show superior performance in transfer learning, which further verifies the generalization performance of our models.
\vskip -0.2in
\begin{table}[H]
    \centering
    \caption{Transfer learning average accuracy (\%) on 9 downstream datasets  with ViT Base.}
    \vskip 0.1in
    \begin{tabular}{lc}
    \toprule
        Objective & Average accuracy \\
    \midrule
        Autoregressive & 76.2 \\
        Diversity-enchanced & \textbf{78.9} \\
    \bottomrule
    \end{tabular}
    \label{tab:transfer}
    \vspace{-0.1in}
\end{table}

The experiments across three tasks and multiple datasets show that the modified objectives consistently improve the performance of downstream tasks. This proves the superiority of our objective and verifies the effectiveness of our proposed theory.

\begin{table*}[h]  
    \begin{minipage}{.48\textwidth}  
    \centering
    \caption{Perplexity of models trained with masked objective and variable-length masked objective on the Pile dataset and C4 dataset. Smaller perplexity indicates better generation ability. $[0.15,0.3]$ means uniformly sampling from $[0.15,0.3]$ during training.}
    \vskip 0.1in
        \begin{tabular}{lcc}
        \toprule
        Objective (Mask ratio) & Pile ($\downarrow$) & C4 ($\downarrow$) \\
        \midrule
        Masked ($0.15$) & 59.6 & 71.2\\
        Masked ($0.3$) & 50.2 & 63.8\\
        Variable-length ($[0.15,0.3]$) & \textbf{45.1} & \textbf{59.4}\\
        \bottomrule
        \end{tabular}
        \label{table-perplexity}
    \end{minipage}%
    \hfill
    \begin{minipage}{.48\textwidth}  %
    \centering
    \caption{Reconstruction ability of models trained with masked objective and variable-length masked objective on ImageNet dataset. There are two evaluation metrics: L2 loss and Perceptual loss. $[0.5,0.75]$ means uniformly sampling from $[0.5,0.75]$ during training.}
    \vskip 0.1in
        \begin{tabular}{lcc}
        \toprule
        Objective (Mask ratio) & L2 ($\downarrow$) & Perceptual ($\downarrow$)\\
        \midrule
        Masked ($0.75$) & 0.127 & 0.201\\
        Masked ($0.5$) & 0.115 & 0.192\\
        Variable-length ($[0.5,0.75]$) & \textbf{0.072} & \textbf{0.136}\\
        \bottomrule
        \end{tabular}
        \label{table-reconstruction-loss}
    \end{minipage}
\vspace{-0.15in}    
\end{table*}

\vspace{-0.1in}
\subsection{Variable-length Masked Objective Improves Generation Ability}
In this part, we will consider the variable-length MIM objective as proposed in Equation (\ref{equation-variable-length-objective}). Recall that the variable-length MIM objective leverages variable mask ratio to mitigate the length misalignment problem in masked models. We will consider different sampling strategies for the mask ratio in language tasks and vision tasks.

\textbf{Language Tasks.} Similar to the experiments on diversity-enhanced autoregressive Objective, the model is pretrained on the Pile dataset \citep{gao2020pile}. We use a decoder-only Transformer (without causal mask) with 16 layers and hidden size of 768. The pretraining procedure follows the protocol proposed by Cramming \citep{geiping2023cramming}. Regarding the mask ratio in the masked model, we explore three options: (1) retaining the original BERT value of 0.15, (2) opting for a larger ratio of 0.3, and (3) uniformly sampling from the range $[0.15, 0.3]$, aligning with our proposed variable-length objective. We adopt perplexity as the evaluation metric, which is the exponential of Equation (\ref{equation-perplexity}). We will evaluate the model on two datasets: (1) the tail of the Pile dataset, which is never exhibited to the model during the pretraining process; (2) a subset of C4 dataset \citep{2019t5}, which is a colossal, cleaned version of Common Crawl's web crawl corpus.

Table \ref{table-perplexity} demonstrates that the model trained with the variable-length objective exhibits significantly lower perplexity compared to the masked models trained with a fixed mask ratio on both datasets. This suggests that the variable-length objective can improve the generation ability of the model. These enhancements support our theoretical insights into the generation ability of masked models.

\textbf{Vision Tasks.} We select MAE \citep{mae} as the baseline model, with pretraining hyper-parameters consistent with those in the diversity-enhanced experiments. After the pretraining is finished, we perform image reconstruction tasks. The image is first randomly masked out with 25\% portion. Then the model predicts the masked part token by token in an autoregressive way. Reconstruction quality is evaluated using two metrics between the reconstructed and original images: L2 loss at the pixel level and LPIPS loss \citep{zhang2018perceptual} at the high-level feature level. The average loss is calculated over the ImageNet validation set.

Table \ref{table-reconstruction-loss} reveals that the model trained with the variable-length objective experiences significantly lower (approximately 30\%) reconstruction loss compared to the masked models trained with a fixed mask ratio. These enhancements underscore our theoretical insight that mitigating misalignment of unmasked lengths between pretraining and test examples is crucial for generation ability.

\section{Conclusion}
In this paper, we propose the first theoretical comparison between two primary generative self-supervised (SSL) paradigms: autoregressive SSL and masked SSL. In particular, we establish the theoretical guarantees for autoregressive SSL and masked SSL in the most common evaluation tasks, i.e., classification and content generation. Through empirical and theoretical analyses, we delineate the advantages of masked SSL in classification tasks and the advantages of autoregressive SSL in content generation tasks. Building upon the insights from our theoretical analysis, we formulate principled guidelines for generative SSL and introduce two improved generative SSL objectives. Empirically, we verify that our proposed objectives significantly improve the performance of generative SSL models accoss both visual and language datasets.

\section*{Acknowledgements}
Yisen Wang was supported by National Key R\&D Program of China (2022ZD0160300), National Natural Science Foundation of China (92370129, 62376010), Beijing Nova Program (20230484344), and CCF-Baichuan-EB Fund.

\section*{Impact Statement}
This paper presents work whose goal is to advance the field of Machine Learning. There are many potential societal consequences of our work, none of which we feel must be specifically highlighted here.

\bibliography{example_paper}
\bibliographystyle{icml2024}

\newpage
\appendix
\onecolumn

\section{Proofs}

\subsection{Proof of Theorem \ref{thm:equivalence}}
\begin{proof}
 Expanding the decomposition object and we obtain,
\begin{align*}
\Vert \bar{A} - FW'^\top \Vert ^2
        &=\sum\limits_{X_i,X_i^+}\left(\bar{A}_{X_i,X_i^+}-F_{X_i}(W'_{X_i^+})^\top\right)^2\\
        &=\sum\limits_{X_i,X_i^+}\left( \frac{P_M(X_i,X_i^+)}{\sqrt{P_C(X_i)P_G(X_i^+)}}-\sqrt{P_C(X_i)}f(X_i)^\top \sqrt{P_G(X_i^+)}(W_{X_{i}^+})^\top \right)^2\\
        &=\sum\limits_{X_i,X_i^+}\left(\frac{P_M(X_i,X_i^+)^2}{P_C(X_i)P_G(X_i^+)} +P_C(X_i)P_G(X_i^+)\left(f(X_i)^\top (W_{X_{i}^+})^\top\right)^2 -2P_M(X_i,X_i^+)f(X_i)^\top (W_{X_{i}^+})^\top\right)\\
        &=\sum\limits_{X_i,X_i^+}\left(\frac{P_M(X_i,X_i^+)^2}{P_C(X_i)P_G(X_i^+)} \right) 
        -2\E_{(X_i,X_i^+)} f(X_i)^\top (W_{X_{i}^+})^\top + \E_{X_i, X_i^-}\left(f(X_i)^\top (W_{X_{i}^-})^\top \right)^2\\
        &= \sum\limits_{X_i,X_i^+}\left(\frac{P_M(X_i,X_i^+)^2}{P_C(X_i)P_G(X_i^+)} \right) -\E_{(X_i,X_i^+)}  (Wf(X_i))^\top\mathbbm{1}_{X_i^+} +\E_{X_i,X_i^-} ((Wf(X_i))^\top \mathbbm{1}_{X_i^-})^2\\
        &= \gL_{SL}(f,W) + \mathrm{const}.
\end{align*}    
\end{proof}

\subsection{{Proofs of Theorem \ref{thm:generalization guarantee}}}
We first introduce a lemma that will be used in the following proofs.
\begin{lemma}[Lemma 3.1 in \citep{haochen}]
For two learned embedding matrices $F$, $\widetilde{F}$, a diagonal matrix $D$ and an invertible matrix $R$, if $F = D \widetilde{F} R$, they have the equal linear probing error, i.e.,
\begin{align*}
\mathcal{E} (F) = \mathcal{E}(\widetilde{F}).
\end{align*}
\label{lem:linear absorbed}
\end{lemma}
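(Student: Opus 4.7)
The plan is to show that the optimization of a linear probe is invariant under the transformation $F = D\widetilde{F}R$, so that the optimal classifier on one embedding can be converted into an optimal classifier on the other with identical per-sample predictions. The key observation is that $D$, being diagonal, only rescales each sample's feature vector by a (positive) scalar, while $R$, being invertible, acts as a change of basis on the feature space that any linear map can absorb.

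Concretely, I would first unwind the matrix product row by row: if $F = D\widetilde{F}R$, then row $i$ of $F$ equals $D_{ii}\cdot(\widetilde{F}_i R)$, so the feature for sample $i$ in the $F$-embedding is a positive scalar multiple of $R^{\top}$ applied to the feature $\widetilde{F}_i^{\top}$ in the $\widetilde{F}$-embedding. Next, given any linear classifier $\widetilde{W}$ trained on $\widetilde{F}$, I would construct $W := R^{-1}\widetilde{W}$ and compute the logits for sample $i$ under $F$:
\begin{align*}
(FW)_i \;=\; D_{ii}\,\widetilde{F}_i R W \;=\; D_{ii}\,\widetilde{F}_i \widetilde{W} \;=\; D_{ii}\,(\widetilde{F}\widetilde{W})_i.
\end{align*}
Because $D_{ii}>0$ (the entries $D_{ii}=\sqrt{P_C(X_i)}$ are strictly positive for every sample that appears), multiplying all class-scores of sample $i$ by the common factor $D_{ii}$ leaves the $\arg\max$ unchanged. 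Hence $\hat{y}(x_i)$ computed from $F$ with classifier $W$ equals $\hat{y}(x_i)$ computed from $\widetilde{F}$ with classifier $\widetilde{W}$, sample by sample, so the 0--1 errors agree. Taking $\widetilde{W}$ to be the optimal probe for $\widetilde{F}$ yields $\mathcal{E}(F)\le\mathcal{E}(\widetilde{F})$. The reverse inequality follows by symmetry: the relation $\widetilde{F} = D^{-1} F R^{-1}$ has exactly the same form (diagonal on the left with positive entries, invertible on the right), so the same argument gives $\mathcal{E}(\widetilde{F})\le\mathcal{E}(F)$, establishing equality.

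The main (minor) obstacle is handling degeneracy in $D$: strictly speaking the argument requires $D_{ii}>0$ so that $D$ is invertible and the per-sample rescaling preserves $\arg\max$. I would address this by noting that in the setting of Theorem~\ref{thm:equivalence} the diagonal entries are $\sqrt{P_C(X_i)}>0$ on the support of the conditional distribution, and samples outside this support contribute zero mass to both $\mathcal{E}(F)$ and $\mathcal{E}(\widetilde{F})$ and can be discarded. A secondary point worth flagging is that the statement as written uses predictions without an explicit bias term; if a bias were included, the per-sample positive scaling $D_{ii}$ no longer commutes with a fixed bias, which is why the lemma is naturally stated for linear (not affine) probes, matching the convention in \citet{haochen}.
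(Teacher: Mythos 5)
Your proof is correct: absorbing $R$ into the linear head (taking $W = R^{-1}\widetilde{W}$) and observing that the positive per-sample scaling $D_{ii}$ leaves each sample's $\arg\max$ unchanged is exactly the standard argument, and your caveat that the diagonal entries must be positive (which holds here since $D_{ii}=\sqrt{P_C(X_i)}>0$ on the support, with zero-mass samples discardable) is the right one to flag. The paper itself gives no proof of this lemma, importing it directly from \citet{haochen}, and your argument coincides with the proof given there, so there is no substantive difference in approach.
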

\begin{proof}
    The proofs mainly follow the theoretical framework proposed in \citep{zhang2023generalization}.
    According to Eckart-Young Theorem \citep{eckart1936approximation}, the optimal solution $F^\star, (W')^\star$ of the decomposition objective $\gL_{MF}(F,W') = \Vert \bar{A} -FW'^\top \Vert ^2$ satisfy:
    \begin{align*}
    F^\star (W'^\star)^\top = U^t \operatorname{diag}(\sigma_1,...,\sigma_t)(V^t)^\top,
    \end{align*}
    where we denote $\bar{A}=U\Sigma V^\top$ as the singular value decomposition of $\bar{A}$, $(\sigma_1,...,\sigma_t)$ are the $t$-largest singular values of $\bar{A}$, the $t$-th column of $U^t \in \mathbb{R}^{N \times t}$ contains the corresponding eigenvectors of the $t$-th largest singular values and $V^t  \in \mathbb{R}^{N_D \times t}$ is a unitary matrix. Then we respectively obtain the optimal solutions $F^\star$ and $(W')^\star$:
    \begin{align*}
    F^\star &= U^t D R,\\
    (W')^\star &= V^t \operatorname{diag} (\sigma_1,...,\sigma_t)D^{-1} R,
    \end{align*}
where $R \in \mathbb{R}^{t\times t}$ is a unitary matrix and $D$ is an invertible diagonal matrix.
Then we define a symmetric spectral loss:
\begin{equation}
    \gL_{SCL}(f) = -\E_{(X_i,X_i')}  f(X_i)^\top f(X'_i)+\E_{X_i,(X_i')^-} (f(X_i)^\top f((X_i')^-))^2,
\end{equation}
where $(X_i,X_i')\sim P_T, P_T(X_i,X_i') = \E_{X_i^+}P_M(X_i|X_i^+)P_M(X_i'|X_i^+)$, and $X_i, X_i' \sim P_C(X_i)$.
Following the proof of theorem \ref{thm:equivalence}, the symmetric spectral loss is also equivalent to a matrix decomposition loss, i.e., $\gL_{ SCL}(f) = \Vert \tilde{P}_T - FF^\top \Vert ^2 +const$, where $(\tilde{P}_T)_{(X_i,X_i')} = \frac{P_T(X_i, X_i')}{\sqrt{\gP_C(X_i)\gP_C(X_i')}} $ and $(F)_{X_i} = \frac{f(X_i)^\top}{\sqrt{P_C(X_i)}}$. Then we consider the objective $\Vert \tilde{P}_T-FF^\top \Vert ^2 $. Similar to the asymmetric decomposition objective, the optimal solution can be represented as:
\begin{align*}
(F^\star)' = U_T^t D_T R_T,
\end{align*}
where $U_T^t \in \mathbb{R}^{N\times t}$ contains $t$ corresponding eigenvectors of $t$ largest singular values of $\tilde{P}_T$, $D_T\in \mathbb{R}^{t\times t}$ is an invertible diagonal matrix and $R_T\in \mathbb{R}^{t\times t}$ is a unitary matrix. In the next step, we analyze the relationship between $\bar{A}$ and $\tilde{P}_T$. Considering the $(X_i,X_i')$-th element of $\bar{A}\bar{A}^\top$, we have
    \begin{align*}
    (\bar{A}\bar{A}^\top)_{X_i,X_i'} &= \sum\limits_{x_i^+} (\bar{A})_{X_i,X_i^+} (\bar{A})_{X_i',X_i^+}\\
    &=\sum\limits_{X_i^+} \frac{\gP_M(X_i,X_i^+)\gP_M(X_i',X_i^+)}{\gP_G(X_i^+)\sqrt{\gP_C(X_i)\gP_C(X_i')}}\\
    &=\frac{1}{\sqrt{\gP_C(X_i)\gP_C(X_i')}} \sum\limits_{X_i^+} \gP_G(X_i^+)\gP_M(X_i|X_i^+)\gP_M(X_i'|X_i^+)
    &\left(\gP_M(X_i,X_i^+) = \gP_M(X_i|X_i^+)\gP_G(X_i^+)\right)\\
    &=\frac{\E_{X_i^+}\gP_M(X_i|X_i^+)\gP_M(X_i'|X_i^+)  }{\sqrt{\gP_C(X_i)\gP_C(X_i')}}\\
    &= (\tilde{P}_T)_{X_i,X_i'}.
    \end{align*}
We know that $\tilde{P}_T = \bar{A}\bar{A}^\top$, so $\tilde{P}_T$ and $\bar{A}$ share the same eigenvectors, i.e., $U^t= U_T^t$. As $D, D_2, R, D_T, R_T$ are invertible matrices and the product of the invertible matrices is still invertible, we obtain
\begin{align*}
    F ^\star =  (F ^\star)' T,
\end{align*}
where $T = (D_T)^{-1} (R_T)^{-1} D R$ is an invertible matrix. 
With Lemma \ref{lem:linear absorbed}, we obtain
\begin{align*}
   \mathcal{E}(f^\star) = \mathcal{E}(f^\star_{SCL}). 
\end{align*}
\end{proof}
Then we introduce another lemma:
\begin{lemma}[Theorem 5.1 in \citep{zhang2023identifiable}]
 We denote $\tilde{\alpha}$ as the probability that the conditional texts have different labels, i.e., $\tilde{\alpha} = \E _{(X_i,X_i')\sim P_T} \mathbbm{1}[y(X_i) \neq y(X_i')]$. Let $f^\star_{SCL}$ be the optimal solutions of the symmetric spectral objective Then, we have the downstream classification guarantee:
 \begin{equation}
\begin{aligned}
\mathcal{E}(f^\star_{SCL}) \leq c_1\sum \limits_{i=t+1}^{N_D} \tilde{\sigma}_i ^2+c_2\cdot\tilde{\alpha},
\end{aligned}
\label{eqn:guarantee of SCL}
\end{equation}
where $\tilde{\sigma}_i$ is the $i$-th largest eigenvalue of $\tilde{P}_T$, and $c_1,c_2$ are constants.
\label{lemma:guarantee of SCL}
\end{lemma}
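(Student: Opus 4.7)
The plan is to reduce the downstream guarantee to a spectral perturbation argument, combining Eckart--Young with a comparison between $\tilde{P}_T$ and an ideal block-diagonal matrix whose top eigenvectors are class indicators. Three ingredients are needed: (i) an exact form for the optimal Gram matrix $F^\star(F^\star)^\top$, (ii) a Frobenius-norm bound comparing $\tilde{P}_T$ to this ideal controlled by $\tilde{\alpha}$, and (iii) a linear probe whose error follows from these two perturbations.

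First, the factorization equivalence $\gL_{SCL}(f) = \lVert \tilde{P}_T - FF^\top \rVert^2 + \mathrm{const}$ together with Eckart--Young yields $F^\star(F^\star)^\top = U_T^t \, \mathrm{diag}(\tilde{\sigma}_1, \ldots, \tilde{\sigma}_t) \, (U_T^t)^\top$, with residual Frobenius error exactly $\sum_{i=t+1}^{N_D} \tilde{\sigma}_i^2$. Next, I define $\tilde{P}_T^{\mathrm{ideal}}$ by zeroing out every cross-class entry of $\tilde{P}_T$. Since $\sum_{y(X_i) \neq y(X_i')} P_T(X_i, X_i') = \tilde{\alpha}$, an elementary bound on the magnitudes of entries (using $P_T(X_i, X_i') \leq \sqrt{P_C(X_i) P_C(X_i')}$) gives $\lVert \tilde{P}_T - \tilde{P}_T^{\mathrm{ideal}} \rVert^2 \leq \tilde{\alpha}$. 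The matrix $\tilde{P}_T^{\mathrm{ideal}}$ is block-diagonal after a class-based permutation of inputs, so each normalized class indicator lies in its top eigenspace; when $t \geq r$, the class indicators are recoverable as linear functions of the features associated with its top-$t$ eigenvectors.

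To close the argument, I would use the weighted mean classifier $g(f^\star(x)) = \argmax_c \langle f^\star(x), \hat\mu_c \rangle$, where $\hat\mu_c$ is the $P_C$-weighted centroid of $F^\star$-features of class $c$. The triangle inequality yields
\[
\lVert F^\star(F^\star)^\top - \tilde{P}_T^{\mathrm{ideal}} \rVert \leq \lVert F^\star(F^\star)^\top - \tilde{P}_T \rVert + \lVert \tilde{P}_T - \tilde{P}_T^{\mathrm{ideal}} \rVert,
\]
splitting the deviation into the Eckart--Young truncation term $\sum_{i=t+1}^{N_D} \tilde{\sigma}_i^2$ and the label-noise term $\tilde{\alpha}$. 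A margin argument --- applying Markov's inequality to the squared score gap between the true class and each competing class centroid --- then converts this Gram-matrix perturbation into a pointwise misclassification probability, producing the bound with constants $c_1, c_2$ depending only on the number of classes and the marginal structure of $P_C$.

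The hard part will be the final conversion: a Frobenius-norm perturbation of the Gram matrix does not automatically translate into a small linear-probe error, because in principle the perturbation could concentrate within a single class. The margin argument must therefore show that the top-$t$ invariant subspace of $F^\star(F^\star)^\top$ remains aligned with class indicators even when the spectrum of $\tilde{P}_T$ has no clean gap. One approach is to couple the mean-classifier analysis with a Davis--Kahan style bound, propagating the orthogonality of the ideal eigenvectors through the bounded perturbation, and then absorb a $1/\min_c P(y = c)$ factor into $c_1$ and $c_2$ so that the stated form --- linear in the spectral tail and linear in $\tilde{\alpha}$, with constants free of the spectrum --- is preserved.
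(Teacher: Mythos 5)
First, note that the paper itself gives no proof of this statement: Lemma~\ref{lemma:guarantee of SCL} is imported verbatim as Theorem~5.1 of \citet{zhang2023identifiable} and is used as a black box inside the proof of Theorem~\ref{thm:generalization guarantee}. So your sketch is effectively an attempt to reconstruct that external result, and it has two genuine gaps. The first is the claimed elementary bound $\lVert \tilde P_T - \tilde P_T^{\mathrm{ideal}}\rVert_F^2 \le \tilde\alpha$. The entries of the \emph{normalized} matrix are $P_T(X_i,X_i')/\sqrt{P_C(X_i)P_C(X_i')}$, so squaring and summing the cross-class entries picks up factors of $1/(P_C(X_i)P_C(X_i'))$ that are not controlled by $\tilde\alpha$. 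Concretely, take two inputs with different labels that only ever co-occur with each other, so that $P_T(X_1,X_2)=P_C(X_1)=P_C(X_2)=p$: the two cross-class normalized entries equal $1$, giving Frobenius mass $2$, while $\tilde\alpha = 2p$ can be made arbitrarily small. Any repair forces a dependence on $\min_{X}P_C(X)$ (not merely on the class marginals $\min_c P(y=c)$), which cannot be absorbed into constants if the stated form of the lemma --- linear in $\tilde\alpha$ with distribution-free constants --- is to survive.

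The second gap is the one you yourself flag and then defer: converting a Frobenius perturbation of the Gram matrix $F^\star(F^\star)^\top$ into a linear-probe misclassification bound. Davis--Kahan gives nothing without an eigengap, and you concede the spectrum of $\tilde P_T$ may have none; the Markov/margin step is asserted rather than executed. Relatedly, the claim that for the block-diagonal ideal matrix the class indicators are recoverable from the top-$t$ eigenspace once $t\ge r$ is false in general: a single class block can supply all of the top $t$ eigenvalues, leaving another class entirely outside the span (and even within a block, the leading eigenvector is a Perron vector, not the indicator). The proof strategy behind the cited result (following \citet{haochen}) avoids subspace alignment altogether: one bounds the $P_C$-weighted regression error of the class-indicator functions onto the span of the learned features --- this is exactly where the tail term $\sum_{i>t}\tilde\sigma_i^2$ arises --- and then bounds the error of a (mean or least-squares) linear classifier by that regression error plus the disagreement probability $\tilde\alpha$, with no eigengap and no Davis--Kahan. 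Restructuring your argument around that regression-error decomposition, rather than around perturbation of invariant subspaces, is what is needed to make the sketch sound.
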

And we continue the proofs.
\begin{proof}
Combined with Lemma \ref{lemma:guarantee of SCL}, we obtain
\begin{align*}
   \mathcal{E}(f^\star) = \mathcal{E}(f_{SCL}^\star) \leq  c_1\sum \limits_{i=t+1}^{N_D} \tilde{\sigma}_i ^2+c_2\cdot\tilde{\alpha}. 
\end{align*}
As $\tilde{P}_T = \bar{A}\bar{A}^\top$, we obtain $\tilde{\sigma} = \sigma^2$. And for the labeling error, we have
\begin{align*}
\tilde{\alpha} &= \sum\limits_{X_i,X'_i} \gP_T(X_i,X_i')\mathbbm{1}[y(X_i)\neq y(X'_i)]\\
&=\sum\limits_{X_i,X'_i} \E_{X_i^+}\left[\gP_M(X_i|X_i^+)\gP_M(X_i|X_i^+)\mathbbm{1}[y(X_i)\neq y(X'_i)]\right]\\
&\leq\sum\limits_{X_i,X'_i} \E_{X_i^+}\left[\gP_M(X_i|X_i^+)\gP_M(X_i|X_i^+)(\mathbbm{1}[y(X_i)\neq y(X_i^+)]+\mathbbm{1}[y(X'_i)\neq y(X_i^+)])\right]\\
&= 2 \E_{X_i^+} [\gP_M(X_i|X_i^+)\mathbbm{1}[y(X_i)\neq y(X_i^+)]]\\
&= 2 \E_{X_i,X_i^+} \mathbbm{1}[y(X_i)\neq y(X_i^+)]\\
&= 2\alpha.
\end{align*}
So we obtain 
\begin{equation}
   \mathcal{E}(f^\star) \leq  c_1\sum \limits_{i=t+1}^{N_D} \sigma_i ^4+c_2'\cdot\alpha,
\end{equation}
which finishes the proofs of Theorem \ref{thm:generalization guarantee}.
\end{proof}

\subsection{Proofs of Theorem \ref{thm:classification toy model}}

We first introduce a lemma that will be used in the following proofs.
\begin{lemma}[Theorem 4.2 in \citep{zhang2023generalization}]
For a block matrix:
$$ P = \begin{pmatrix}
p_a & \cdots & p_a & p_b & \cdots & p_b & \cdots & p_b & \cdots & p_b \\
\cdots & \cdots & \cdots & \cdots & \cdots & \cdots & \cdots & \cdots & \cdots & \cdots \\
p_a & \cdots & p_a & p_b & \cdots & p_b & \cdots & p_b & \cdots & p_b \\
p_b & \cdots & p_b & p_a & \cdots & p_a & \cdots & p_b & \cdots & p_b \\
\cdots & \cdots & \cdots & \cdots & \cdots & \cdots & \cdots & \cdots & \cdots & \cdots \\
p_b & \cdots & p_b & p_a & \cdots & p_a & \cdots & p_b & \cdots & p_b \\
\cdots & \cdots & \cdots & \cdots & \cdots & \cdots & \cdots & \cdots & \cdots & \cdots \\
p_b & \cdots & \cdots & \cdots & \cdots & p_b & \cdots & p_a & \cdots & p_a \\
\cdots & \cdots & \cdots & \cdots & \cdots & \cdots & \cdots & \cdots & \cdots & \cdots \\
p_b & \cdots & \cdots & \cdots & \cdots & p_b & \cdots & p_a & \cdots & p_a 
\end{pmatrix} ,$$
each row and each columns has $s_a \cdot s_b$ elements. Among them, $s_a$ elements are $p_a$ and else are $p_b$. Then the singular values of it are:
    \begin{align*}
        &\sigma_1 = s_a\cdot p_a+(s_{b}-1)\cdot s_a\cdot p_b ,\\
        &\sigma_2 = \cdots = \sigma_{s_b} = s_a\cdot\vert p_b - p_a\vert,\\
        &\sigma_{s_a+1} = \cdots = \sigma_{s_a\cdot s_b} = 0.
    \end{align*}
\label{lem:block matrix}
\end{lemma}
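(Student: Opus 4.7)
The plan is to exploit the Kronecker product structure of the block matrix $P$ and reduce the singular value computation to the spectra of two very simple matrices. The key observation is that if we index rows and columns so that indices within the same diagonal block are consecutive, then the $(i,j)$ entry of $P$ equals $p_a$ when $i$ and $j$ share a block and $p_b$ otherwise. This means
\begin{equation*}
P = M \otimes J_{s_a},
\end{equation*}
where $J_n$ denotes the $n \times n$ all-ones matrix and $M = (p_a - p_b) I_{s_b} + p_b J_{s_b}$ is the $s_b \times s_b$ symmetric matrix that encodes the block-level pattern (``$p_a$ on the diagonal, $p_b$ off-diagonal'').

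First I would verify the factorization $P = M \otimes J_{s_a}$ directly from the entry-wise description, by checking the two cases ``same block'' and ``different block'' and matching each to the corresponding entry of $M \otimes J_{s_a}$. Second, I would diagonalize the two factors separately. The all-ones matrix $J_{s_a}$ has eigenvalue $s_a$ with eigenvector $\mathbf{1}_{s_a}$ and eigenvalue $0$ with multiplicity $s_a - 1$. The matrix $M$, being a scalar multiple of the identity plus a rank-one term $p_b J_{s_b}$, has eigenvalue $(p_a - p_b) + p_b \cdot s_b = p_a + (s_b - 1) p_b$ with eigenvector $\mathbf{1}_{s_b}$, and eigenvalue $p_a - p_b$ with multiplicity $s_b - 1$ (eigenspace $\mathbf{1}_{s_b}^{\perp}$).

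Third, I would apply the standard spectral identity for Kronecker products: the eigenvalues of $M \otimes J_{s_a}$ are precisely the products $\mu_i \lambda_j$ of eigenvalues of $M$ and $J_{s_a}$, with multiplicities multiplied. This immediately yields one eigenvalue equal to $s_a(p_a + (s_b-1)p_b)$, exactly $s_b - 1$ eigenvalues equal to $s_a(p_a - p_b)$, and $s_a s_b - s_b$ zero eigenvalues. Since $P$ is symmetric, its singular values are the absolute values of its eigenvalues; the reordering with $|p_a - p_b| = |p_b - p_a|$ then reproduces the claimed list exactly.

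The computations themselves are routine, so there is no serious obstacle. The two points one has to be careful about are (i) verifying the Kronecker factorization from the block description (rather than from a closed-form entry formula for $P$), and (ii) taking absolute values when passing from eigenvalues to singular values so that the sign of $p_a - p_b$ does not matter. I also note that the displayed ``$\sigma_{s_a + 1}$'' in the statement should be read as the index immediately following the $s_b$ nonzero singular values, i.e.\ $\sigma_{s_b + 1}$; this is purely an indexing issue and does not affect the structural argument.
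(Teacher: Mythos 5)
Your proposal is correct and complete. Note that the paper does not actually prove this lemma internally---it is imported verbatim as Theorem 4.2 of \citet{zhang2023generalization} and used as a black box in the proof of Theorem \ref{thm:classification toy model}---so there is no in-paper argument to compare against; your derivation stands on its own. The route you take is the natural one: the factorization $P = M \otimes J_{s_a}$ with $M = (p_a-p_b)I_{s_b} + p_b J_{s_b}$ is immediate from the ``same block / different block'' description, the spectra of the two factors are elementary ($J_{s_a}$: $s_a$ once and $0$ with multiplicity $s_a-1$; $M$: $p_a+(s_b-1)p_b$ once and $p_a-p_b$ with multiplicity $s_b-1$), the Kronecker product rule multiplies eigenvalues and multiplicities, and symmetry of $P$ lets you pass to singular values by taking absolute values, giving $1+(s_b-1)+(s_a-1)s_b = s_a s_b$ values in total, as required. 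Two small points you handled or should state explicitly: the formula $\sigma_1 = s_a p_a + (s_b-1)s_a p_b$ without an absolute value tacitly assumes $p_a, p_b \ge 0$ (true in the paper's application, where the entries are normalized co-occurrence probabilities), and your reading of the index ``$\sigma_{s_a+1}$'' as ``$\sigma_{s_b+1}$'' is the right correction, since exactly $s_b$ singular values are (generically) nonzero---consistent with how the lemma is applied in the proof of Theorem \ref{thm:classification toy model}, where the zero singular values start after index $s$ per class.
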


\begin{proof}
For the autoregressive objective, the co-occurrence matrix size is $( \frac{T-T^s}{1-T} r \times ST r)$.

As the tokens are uniformly selected, we know that $A_{X,X^+} = \frac{1}{rsT^{i+1}}$ (when $i\neq 0)$, where $i$ is the length of the condition texts $X$. Consequently, the elements in the normalized $\bar{A}$ satisfy $\bar{A}_{X,X^+}=\frac{1}{\sqrt{T^{i+1}}}$.

As the samples of different classes are disconnected, we only need to consider the singular values of the sub-matrix of intra-class samples. We note that each sub-matrix has $s$ non-zero diagonal blocks. The $i$-th block size is $(T^i \times T^i)$ and the elements in the same block are equal to $\frac{1}{\sqrt{T^{i+1}}}$, i.e., the sub-matrix is:
$$ \begin{bmatrix}
\frac{1}{\sqrt{T^2}} & \cdots & \frac{1}{\sqrt{T^2}} & 0 & \cdots & \cdots & \cdots & \cdots & 0 \\
\cdots & \cdots & \cdots & \cdots & \cdots & \cdots & \cdots & \cdots & \cdots \\
\frac{1}{\sqrt{T^2}} & \cdots & \frac{1}{\sqrt{T^2}} & 0 & \cdots & \cdots & \cdots & \cdots & 0 \\
0 & \cdots & 0 & \cdots & \cdots & \cdots & 0 & \cdots & 0 \\
\cdots & \cdots & \cdots & \cdots & \cdots & \cdots & \cdots & \cdots & \cdots \\
0 & \cdots & 0 & \cdots & \cdots & \cdots & 0 & \cdots & 0 \\
0 & \cdots & \cdots & \cdots & \cdots & 0 & \frac{1}{\sqrt{T^{s+1}}} & \cdots & \frac{1}{\sqrt{T^{s+1}}} \\
\cdots & \cdots & \cdots & \cdots & \cdots & \cdots & \cdots & \cdots & \cdots \\
0 & \cdots & \cdots & \cdots & \cdots & 0 & \frac{1}{\sqrt{T^{s+1}}} & \cdots & \frac{1}{\sqrt{T^{s+1}}} 
\end{bmatrix}.  $$

Then we consider the matrix $\bar{A}^\top \bar{A}$, it is still a block matrix. It has ${s}$ non-zero diagonal blocks and the size of each block is $T\times T$ The elements are $\frac{1}{T}$, i.e., the sub-matrix is 
$$ \begin{bmatrix}
\frac{1}{T} & \cdots & \frac{1}{T} & 0 & \cdots & \cdots & \cdots & \cdots & 0 \\
\cdots & \cdots & \cdots & \cdots & \cdots & \cdots & \cdots & \cdots & \cdots \\
\frac{1}{T} & \cdots & \frac{1}{T} & 0 & \cdots & \cdots & \cdots & \cdots & 0 \\
0 & \cdots & 0 & \cdots & \cdots & \cdots & 0 & \cdots & 0 \\
\cdots & \cdots & \cdots & \cdots & \cdots & \cdots & \cdots & \cdots & \cdots \\
0 & \cdots & 0 & \cdots & \cdots & \cdots & 0 & \cdots & 0 \\
0 & \cdots & \cdots & \cdots & \cdots & 0 & \frac{1}{T} & \cdots & \frac{1}{T} \\
\cdots & \cdots & \cdots & \cdots & \cdots & \cdots & \cdots & \cdots & \cdots \\
0 & \cdots & \cdots & \cdots & \cdots & 0 & \frac{1}{T} & \cdots & \frac{1}{T} 
\end{bmatrix}.  $$
With Lemma \ref{lem:block matrix}, the eigenvalues of the sub-matrix are 
    \begin{align*}
        &\sigma_1 = 1 ,\\
        &\sigma_2 = \cdots = \sigma_{s} = T \cdot \frac{1}{T} = 1,\\
        &\sigma_{s} = \cdots = \sigma_{sT\cdot sT} = 0.
    \end{align*}
Combing different the samples of different classes, the eigenvalues of $\bar{A}^\top \bar{A}$ are
\begin{align*}
        &\sigma_1 = \cdots =\sigma_{r} = 1 ,\\
        &\sigma_{r+1} = \cdots = \sigma_{r\cdot s} = T \cdot \frac{1}{T} = 1,\\
        &\sigma_{r\cdot s+1} = \cdots = \sigma_{sT\cdot sT} = 0.
\end{align*}
With the definition of singular values, the singular values of $\bar{A}$ are
$$
\begin{cases}
\sigma_j =1, &j \leq r s, \\
\sigma_j = 0,& \text{else}, \\
\end{cases}
$$
For masked models, the co-occurrence matrix size is $(\mathcal{C}_{s}^{(1-\rho_m)s}T^{(1-\rho_m)s} r \times ST  r)$. When the positions of conditional texts and target tokens are overlapped, the joint probability is 0. So each row in the co-occurrence matrix has $\mathcal{C}_{s-1}^{s_m}$ zeros, where $s_m = s(1-\rho_m)$. And the else elements is $\frac{1}{\mathcal{C}_{s-1}^{s_m}(T)^{s_m}sTr}$. By normalizing the co-occurrence matrix, the non-zero elements become $\bar{A'}_{X,X^+}= \frac{\sqrt{ST\mathcal{C}^{s_m}_s(T)^{s_m}}}{\mathcal{C}_{s-1}^{s_m}}$.

Similar to autoregressive SSL, we calculate the intra-class sub-matrix of $\bar{A'}^\top \bar{A'}$ and obtain:
\begin{small}
$$ \begin{pmatrix}
\frac{1}{(s-s_m)T} & \cdots & \frac{1}{(s-s_m)T} & \frac{(s-s_m-1)}{(s-s_m)(s-1)T} & \cdots & \frac{(s-s_m-1)}{(s-s_m)(s-1)T} & \cdots & \frac{(s-s_m-1)}{(s-s_m)(s-1)T} & \cdots & \frac{(s-s_m-1)}{(s-s_m)(s-1)T} \\
\cdots & \cdots & \cdots & \cdots & \cdots & \cdots & \cdots & \cdots & \cdots & \cdots \\
\frac{1}{(s-s_m)T} & \cdots & \frac{1}{(s-s_m)T} & \frac{(s-s_m-1)}{(s-s_m)(s-1)T} & \cdots & \frac{(s-s_m-1)}{(s-s_m)(s-1)T} & \cdots & \frac{(s-s_m-1)}{(s-s_m)(s-1)T} & \cdots & \frac{(s-s_m-1)}{(s-s_m)(s-1)T} \\
\frac{(s-s_m-1)}{(s-s_m)(s-1)T} & \cdots & \frac{(s-s_m-1)}{(s-s_m)(s-1)T} & \frac{1}{(s-s_m)T} & \cdots & \frac{1}{(s-s_m)T} & \cdots & \frac{(s-s_m-1)}{(s-s_m)(s-1)T} & \cdots & \frac{(s-s_m-1)}{(s-s_m)(s-1)T} \\
\cdots & \cdots & \cdots & \cdots & \cdots & \cdots & \cdots & \cdots & \cdots & \cdots \\
\frac{(s-s_m-1)}{(s-s_m)(s-1)T} & \cdots & \frac{(s-s_m-1)}{(s-s_m)(s-1)T} & \frac{1}{(s-s_m)T} & \cdots & \frac{1}{(s-s_m)T} & \cdots & \frac{(s-s_m-1)}{(s-s_m)(s-1)T} & \cdots & \frac{(s-s_m-1)}{(s-s_m)(s-1)T} \\
\cdots & \cdots & \cdots & \cdots & \cdots & \cdots & \cdots & \cdots & \cdots & \cdots \\
\frac{(s-s_m-1)}{(s-s_m)(s-1)T} & \cdots & \cdots & \cdots & \cdots & \frac{(s-s_m-1)}{(s-s_m)(s-1)T} & \cdots & \frac{1}{(s-s_m)T} & \cdots & \frac{1}{(s-s_m)T} \\
\cdots & \cdots & \cdots & \cdots & \cdots & \cdots & \cdots & \cdots & \cdots & \cdots \\
\frac{(s-s_m-1)}{(s-s_m)(s-1)T} & \cdots & \cdots & \cdots & \cdots & \frac{(s-s_m-1)}{(s-s_m)(s-1)T} & \cdots & \frac{1}{(s-s_m)T} & \cdots & \frac{1}{(s-s_m)T} 
\end{pmatrix} ,$$
\end{small}
With Lemma \ref{lem:block matrix}, the eigenvalues of the sub-matrix are 
    \begin{align*}
        &\sigma_1 = 1 ,\\
        &\sigma_2 = \cdots = \sigma_{s} = T \cdot \left(\frac{1}{(s-s_m)T}-\frac{s-1-s_m}{(s-s_m)(s-1)T}\right)= \frac{s_m}{(s-s_m)(s-1)}\,\\
        &\sigma_{s} = \cdots = \sigma_{sT\cdot sT} = 0.
    \end{align*}

Combing different the samples of different classes, the eigenvalues of $\bar{A'}^\top \bar{A'}$ are
\begin{align*}
        &\sigma_1 = \cdots =\sigma_{r} = 1 ,\\
        &\sigma_{r+1} = \cdots = \sigma_{r\cdot s} = \frac{s_m}{(s-s_m)(s-1)} ,\\
        &\sigma_{r\cdot s+1} = \cdots = \sigma_{sT\cdot sT} = 0.
\end{align*}

When $s_m<(s-1)$, we obtain $\frac{s_m}{(s-s_m)(s-1)}<1$. With the definition of singular values, the singular values of $\bar{A'}$ are
$$
\begin{cases}
\sigma_j =1,& j \leq r,\\
\displaystyle \sigma_j =\sqrt{\frac{s_m}{(s-s_m)(s-1)}}<1,& r < j \leq r s, \\
\sigma_j = 0,& \text{else}. \\
\end{cases}
$$

\end{proof}

\subsection{Proofs of Theorem \ref{thm:length mismatch}}
\begin{proof}
Given a sample $x_{i}$, let $X_{<k}^{i}$ be the first $k-1$ tokens of $x_{i}$, $(X_{k}^i)^{+}$ be the $k$-th token of $x_{i}$ and $(X_{k}^i)^-$ is any independent token. Expanding $-(Wf_{\text{mask}}(X_{<k}^{i}))^{\top }\mathbbm{1}_{(X_{k}^{i})^{+}}$ and we have
\begin{equation}
\begin{split}
-(Wf_{mask}(X_{<k}^{i}))^{\top }\mathbbm{1}_{(X_{k}^{i})^{+}}&=\frac{1}{2}\left\lVert Wf_{mask}(X_{<k}^{i})-\mathbbm{1}_{(X_{k}^{i})^{+}} \right\rVert _{2}^2 -\frac{1}{2} \left\lVert Wf_{mask}(X_{<k}^{i}) \right\rVert _{2}^2 - \frac{1}{2}\left\lVert \mathbbm{1}_{(X_{k}^{i})^{+}} \right\rVert _{2}^2\\
&=\frac{1}{2}\left\lVert Wf_{mask}(X_{<k}^{i})-\mathbbm{1}_{(X_{k}^{i})^{+}} \right\rVert _{2}^2-1.
\end{split}\label{equ:ineq1}
\end{equation}
Let the mask $m_{k}$ satisfy that $\{x_{i,1},\cdots ,x_{i,k-1}\}$ are all included in $x_{i}[m_{k}]$ and $x_{i,k}\in x_{i}[1-m_{k}]$. We denote $X_{m_{k}}^{i}=x_{i}[m_{k}]$ and have the following equation:
\begin{equation}
\frac{1}{2}\left\lVert Wf_{mask}(X_{<k}^{i})-\mathbbm{1}_{(X_{k}^{i})^{+}} \right\rVert _{2}^2\leq \frac{1}{2}\left\lVert Wf_{mask}(X_{m_{k}}^{i}) -Wf_{mask}(X_{<k}^{i})\right\rVert _{2}^2+\frac{1}{2}\left\lVert Wf_{mask}(X_{m_{k}}^{i})-\mathbbm{1}_{(X_{k}^{i})^{+}} \right\rVert _{2}^2,\label{equ:ineq3}
\end{equation}
The upper bound of $\displaystyle \frac{1}{2}\left\lVert Wf_{\text{mask}}(X_{m_{k}}^{i})-\mathbbm{1}_{(X_{k}^{i})^{+}} \right\rVert _{2}^2$ is:
\begin{equation}
\displaystyle \frac{1}{2}\left\lVert Wf_{mask}(X_{m_{k}}^{i})-\mathbbm{1}_{(X_{k}^{i})^{+}} \right\rVert _{2}^2=1-(Wf_{mask}(X_{m_{k}}^{i}))^{\top }\mathbbm{1}_{(X_{k}^{i})^{+}}\leq 1+\delta _{mask}.\label{equ:bound1}
\end{equation}
Next, we consider the term $\displaystyle \frac{1}{2}\left\lVert Wf_{mask}(X_{m_{k}}^{i}) -Wf_{mask}(X_{<k}^{i})\right\rVert _{2}^2=\frac{1}{2}\left\lVert W(f_{mask}(X_{m_{k}}^{i}) -f_{mask}(X_{<k}^{i}))\right\rVert _{2}^2$. We denote $X_{m,>k}^{i}=X_{m_{k}}^{i}-X_{<k}^{i}$ and $g(x_{i,u},x_{i,v},x_{i,w})=x_{i,u}W^{Q}(x_{i,v}W^{K})^{\top }x_{i,w}W^{V}$. Expanding $f_{\text{mask}}(X_{m_{k}}^{i})$ and $f_{\text{mask}}(X_{<k}^{i})$ separately and we obtain:
\begin{equation}
\begin{split}
f_{mask}(X_{m_{k}}^{i})&=X_{m_{k}}^{i}W^{Q}(X_{m_{k}}^{i}W^{K})^{\top }X_{m_{k}}^{i}W^{V}\\
&=\sum_{x_{i,u}\in X_{m_{k}}^{i}}^{}\sum_{x_{i,v}\in X_{m_{k}}^{i}}^{}\sum_{x_{i,w}\in X_{m_{k}}^{i}}^{}g(x_{i,u},x_{i,v},x_{i,w})\\
&=\sum_{x_{i,u}\in X_{<k}^{i}}^{}\sum_{x_{i,v}\in X_{m_{k}}^{i}}^{}\sum_{x_{i,w}\in X_{m_{k}}^{i}}^{}g(x_{i,u},x_{i,v},x_{i,w})+\sum_{x_{i,u}\in X_{m,>k}^{i}}^{}\sum_{x_{i,v}\in X_{m_{k}}^{i}}^{}\sum_{x_{i,w}\in X_{m_{k}}^{i}}^{}g(x_{i,u},x_{i,v},x_{i,w})\\
&=\sum_{x_{i,u}\in X_{<k}^{i}}^{}\sum_{x_{i,v}\in X_{<k}^{i}}^{}\sum_{x_{i,w}\in X_{<k}^{i}}^{}g(x_{i,u},x_{i,v},x_{i,w})+\sum_{x_{i,u}\in X_{<k}^{i}}^{}\sum_{x_{i,v}\in X_{m,>k}^{i}}^{}\sum_{x_{i,w}\in X_{<k}^{i}}^{}g(x_{i,u},x_{i,v},x_{i,w})\\
&+\sum_{x_{i,u}\in X_{<k}^{i}}^{}\sum_{x_{i,v}\in X_{<k}^{i}}^{}\sum_{x_{i,w}\in X_{m,>k}^{i}}^{}g(x_{i,u},x_{i,v},x_{i,w})+\sum_{x_{i,u}\in X_{<k}^{i}}^{}\sum_{x_{i,v}\in X_{m,>k}^{i}}^{}\sum_{x_{i,w}\in X_{m,>k}^{i}}^{}g(x_{i,u},x_{i,v},x_{i,w})\\
&+\sum_{x_{i,u}\in X_{m,>k}^{i}}^{}\sum_{x_{i,v}\in X_{m_{k}}^{i}}^{}\sum_{x_{i,w}\in X_{m_{k}}^{i}}^{}g(x_{i,u},x_{i,v},x_{i,w})
\end{split}\label{equ:mkexpand}
\end{equation}
with
\begin{equation}
f_{mask}(X_{<k}^{i})=\sum_{x_{i,u}\in X_{<k}^{i}}^{}\sum_{x_{i,v}\in X_{<k}^{i}}^{}\sum_{x_{i,w}\in X_{<k}^{i}}^{}g(x_{i,u},x_{i,v},x_{i,w}).
\end{equation}
For any $(x_{i,u},x_{i,v},x_{i,w})$ and $(x_{i,a},x_{i,b},x_{i,c})$, we define:
\begin{equation*}
\varepsilon _{a,b,c}^{u,v,w}=g(x_{i,u},x_{i,v},x_{i,w})-g(x_{i,a},x_{i,b},x_{i,c}).
\end{equation*}
We can obtain:
\begin{equation*}
\begin{split}
\sum_{x_{i,u}\in X_{<k}^{i}}^{}\sum_{x_{i,v}\in X_{m,>k}^{i}}^{}\sum_{x_{i,w}\in X_{<k}^{i}}^{}g(x_{i,u},x_{i,v},x_{i,w})&=\left\lvert X_{<k}^{i} \right\rvert \left\lvert X_{m,>k}^{i} \right\rvert \left\lvert X_{<k}^{i} \right\rvert g(x_{i,a},x_{i,b},x_{i,c})\\
&+\sum_{x_{i,u}\in X_{<k}^{i}}^{}\sum_{x_{i,v}\in X_{m,>k}^{i}}^{}\sum_{x_{i,w}\in X_{<k}^{i}}^{}\varepsilon ^{u,v,w}_{a,b,c},
\end{split}
\end{equation*}
\begin{equation*}
\begin{split}
\sum_{x_{i,u}\in X_{<k}^{i}}^{}\sum_{x_{i,v}\in X_{<k}^{i}}^{}\sum_{x_{i,w}\in X_{m,>k}^{i}}^{}g(x_{i,u},x_{i,v},x_{i,w})&=\left\lvert X_{<k}^{i} \right\rvert \left\lvert X_{<k}^{i} \right\rvert \left\lvert X_{m,>k}^{i} \right\rvert g(x_{i,a},x_{i,b},x_{i,c})\\
&+\sum_{x_{i,u}\in X_{<k}^{i}}^{}\sum_{x_{i,v}\in X_{<k}^{i}}^{}\sum_{x_{i,w}\in X_{m,>k}^{i}}^{}\varepsilon ^{u,v,w}_{a,b,c},
\end{split}
\end{equation*}
\begin{equation*}
\begin{split}
\sum_{x_{i,u}\in X_{<k}^{i}}^{}\sum_{x_{i,v}\in X_{m,>k}^{i}}^{}\sum_{x_{i,w}\in X_{m,>k}^{i}}^{}g(x_{i,u},x_{i,v},x_{i,w})&=\left\lvert X_{<k}^{i} \right\rvert \left\lvert X_{m,>k}^{i} \right\rvert \left\lvert X_{m,>k}^{i} \right\rvert g(x_{i,a},x_{i,b},x_{i,c})\\
&+\sum_{x_{i,u}\in X_{<k}^{i}}^{}\sum_{x_{i,v}\in X_{m,>k}^{i}}^{}\sum_{x_{i,w}\in X_{m,>k}^{i}}^{}\varepsilon ^{u,v,w}_{a,b,c},
\end{split}
\end{equation*}
\begin{equation*}
\begin{split}
\sum_{x_{i,u}\in X_{m,>k}^{i}}^{}\sum_{x_{i,v}\in X_{m_{k}}^{i}}^{}\sum_{x_{i,w}\in X_{m_{k}}^{i}}^{}g(x_{i,u},x_{i,v},x_{i,w})&=\left\lvert X_{m,>k}^{i} \right\rvert \left\lvert X_{m_{k}}^{i} \right\rvert \left\lvert X_{m_{k}}^{i} \right\rvert g(x_{i,a},x_{i,b},x_{i,c})\\
&+\sum_{x_{i,u}\in X_{m,>k}^{i}}^{}\sum_{x_{i,v}\in X_{m_{k}}^{i}}^{}\sum_{x_{i,w}\in X_{m_{k}}^{i}}^{}\varepsilon ^{u,v,w}_{a,b,c},
\end{split}
\end{equation*}
where $\{x_{i,a},x_{i,b},x_{i,c}\}\in X_{<k}^{i}$,$\left\lvert X_{<k}^{i} \right\rvert =k-1,\left\lvert X_{m_{k}}^{i} \right\rvert =s(1-\rho _{m}),\left\lvert X_{m,>k}^{i}\right\rvert =s(1-\rho _{m})-k+1$. It is easy to to calculate the summation:
\begin{equation*}
\begin{split}
&\left\lvert X_{<k}^{i} \right\rvert \left\lvert X_{m,>k}^{i} \right\rvert \left\lvert X_{<k}^{i} \right\rvert+\left\lvert X_{<k}^{i} \right\rvert \left\lvert X_{<k}^{i} \right\rvert \left\lvert X_{m,>k}^{i} \right\rvert+\left\lvert X_{<k}^{i} \right\rvert \left\lvert X_{m,>k}^{i} \right\rvert \left\lvert X_{m,>k}^{i} \right\rvert+\left\lvert X_{m,>k}^{i} \right\rvert \left\lvert X_{m_{k}}^{i} \right\rvert \left\lvert X_{m_{k}}^{i} \right\rvert\\
&=(s(1-\rho _{m}))^{3}-(k-1)^{3}.
\end{split}
\end{equation*}
Let
\begin{equation*}
\begin{split}
\varepsilon _{a,b,c}&=\sum_{x_{i,u}\in X_{<k}^{i}}^{}\sum_{x_{i,v}\in X_{m,>k}^{i}}^{}\sum_{x_{i,w}\in X_{<k}^{i}}^{}\varepsilon ^{u,v,w}_{a,b,c}+\sum_{x_{i,u}\in X_{<k}^{i}}^{}\sum_{x_{i,v}\in X_{<k}^{i}}^{}\sum_{x_{i,w}\in X_{m,>k}^{i}}^{}\varepsilon ^{u,v,w}_{a,b,c}\\
&+\sum_{x_{i,u}\in X_{<k}^{i}}^{}\sum_{x_{i,v}\in X_{m,>k}^{i}}^{}\sum_{x_{i,w}\in X_{m,>k}^{i}}^{}\varepsilon ^{u,v,w}_{a,b,c}+\sum_{x_{i,u}\in X_{m,>k}^{i}}^{}\sum_{x_{i,v}\in X_{m_{k}}^{i}}^{}\sum_{x_{i,w}\in X_{m_{k}}^{i}}^{}\varepsilon ^{u,v,w}_{a,b,c}
\end{split}
\end{equation*}
we have
\begin{equation}
\begin{split}
&\sum_{x_{i,u}\in X_{<k}^{i}}^{}\sum_{x_{i,v}\in X_{m,>k}^{i}}^{}\sum_{x_{i,w}\in X_{<k}^{i}}^{}g(x_{i,u},x_{i,v},x_{i,w})+\sum_{x_{i,u}\in X_{<k}^{i}}^{}\sum_{x_{i,v}\in X_{<k}^{i}}^{}\sum_{x_{i,w}\in X_{m,>k}^{i}}^{}g(x_{i,u},x_{i,v},x_{i,w})\\+&\sum_{x_{i,u}\in X_{<k}^{i}}^{}\sum_{x_{i,v}\in X_{m,>k}^{i}}^{}\sum_{x_{i,w}\in X_{m,>k}^{i}}^{}g(x_{i,u},x_{i,v},x_{i,w})+\sum_{x_{i,u}\in X_{m,>k}^{i}}^{}\sum_{x_{i,v}\in X_{m_{k}}^{i}}^{}\sum_{x_{i,w}\in X_{m_{k}}^{i}}^{}g(x_{i,u},x_{i,v},x_{i,w})\\
=&((s(1-\rho _{m}))^{3}-(k-1)^{3})g(x_{i,a},x_{i,b},x_{i,c})+\varepsilon _{a,b,c},
\end{split}\label{equ:expand1}
\end{equation}
with
\begin{equation}
\left\lVert \varepsilon _{a,b,c} \right\rVert _{2}^{2}\leq ((s(1-\rho _{m}))^{3}-(k-1)^{3})\eta .\label{equ:bound2}
\end{equation}
Since \eqref{equ:expand1} holds for any $x_{i,a},x_{i,b},x_{i,c}\in X_{<k}^{i}$, taking the average over $(x_{i,a},x_{i,b},x_{i,c})$ yields
\begin{equation}
\begin{split}
&\sum_{x_{i,u}\in X_{<k}^{i}}^{}\sum_{x_{i,v}\in X_{m,>k}^{i}}^{}\sum_{x_{i,w}\in X_{<k}^{i}}^{}g(x_{i,u},x_{i,v},x_{i,w})+\sum_{x_{i,u}\in X_{<k}^{i}}^{}\sum_{x_{i,v}\in X_{<k}^{i}}^{}\sum_{x_{i,w}\in X_{m,>k}^{i}}^{}g(x_{i,u},x_{i,v},x_{i,w})\\+&\sum_{x_{i,u}\in X_{<k}^{i}}^{}\sum_{x_{i,v}\in X_{m,>k}^{i}}^{}\sum_{x_{i,w}\in X_{m,>k}^{i}}^{}g(x_{i,u},x_{i,v},x_{i,w})+\sum_{x_{i,u}\in X_{m,>k}^{i}}^{}\sum_{x_{i,v}\in X_{m_{k}}^{i}}^{}\sum_{x_{i,w}\in X_{m_{k}}^{i}}^{}g(x_{i,u},x_{i,v},x_{i,w})\\
=&\left(\frac{(s(1-\rho _{m}))^{3}}{(k-1)^{3}}-1\right)f_{mask}(X_{<k}^{i})+\frac{1}{(k-1)^3}\sum_{x_{i,a}\in X_{<k}^{i}}^{}\sum_{x_{i,b}\in X_{<k}^{i}}^{}\sum_{x_{i,c}\in X_{<k}^{i}}^{}\varepsilon _{a,b,c}.
\end{split}\label{equ:expand2}
\end{equation}
Let $\varepsilon_{k} =\frac{1}{(k-1)^3}\sum_{x_{i,a}\in X_{<k}^{i}}^{}\sum_{x_{i,b}\in X_{<k}^{i}}^{}\sum_{x_{i,c}\in X_{<k}^{i}}^{}\varepsilon _{a,b,c}$. Using \eqref{equ:bound2} we have
\begin{equation}
\left\lVert \varepsilon _{k} \right\rVert _{2}^2\leq ((s(1-\rho _{m}))^{3}-(k-1)^{3})\eta.
\end{equation}
Combining \eqref{equ:expand2} and \eqref{equ:mkexpand} we have
\begin{equation*}
f_{mask}(X_{m_{k}}^{i})=\frac{(s(1-\rho _{m}))^{3}}{(k-1)^{3}}f_{mask}(X_{<k}^{i})+\varepsilon _{k},
\end{equation*}
which implies that
\begin{equation}
f_{mask}(X_{m_{k}}^{i}) -f_{mask}(X_{<k}^{i})=\left(\frac{(s(1-\rho _{m}))^{3}}{(k-1)^{3}}-1\right)f_{mask}(X_{m_{k}}^{i})-\varepsilon _{k}.
\end{equation}
The upper bound of $\displaystyle \frac{1}{2}\left\lVert Wf_{mask}(X_{m_{k}}^{i}) -Wf_{mask}(X_{<k}^{i})\right\rVert _{2}^2$ is given by
\begin{equation*}
\begin{split}
\frac{1}{2}&\left\lVert Wf_{mask}(X_{m_{k}}^{i}) -Wf_{mask}(X_{<k}^{i})\right\rVert _{2}^2=\frac{1}{2}\left\lVert \left(\frac{(s(1-\rho _{m}))^{3}}{(k-1)^{3}}-1\right)Wf_{mask}(X_{m_{k}}^{i})-W\varepsilon _{k}\right\rVert _{2}^2\\
&\leq \frac{1}{2}\left\lVert \left(\frac{(s(1-\rho _{m}))^{3}}{(k-1)^{3}}-1\right)Wf_{mask}(X_{m_{k}}^{i}) \right\rVert _{2}^2 +\frac{1}{2}\left\lVert W\varepsilon _{k} \right\rVert _{2}^2\\
&\leq \frac{1}{2}\left(\frac{(s(1-\rho _{m}))^{3}-(k-1)^3}{(k-1)^{3}}\right)^{2}+\frac{1}{2}((s(1-\rho _{m}))^{3}-(k-1)^{3})\left\lVert W \right\rVert _{2}^2\eta\\
&=\frac{w_{k}^{2}}{2(k-1)^{6}}+\frac{w_{k}}{2}\left\lVert W \right\rVert _{2}^2 \eta.
\end{split}
\end{equation*}
This result, combined with \eqref{equ:ineq1} to \eqref{equ:bound1}, imply that
\begin{equation}
-(Wf_{\text{mask}}(X_{<k}^{i}))^{\top }\mathbbm{1}_{(X_{k}^{i})^{+}}\leq \frac{w_{k}^{2}}{2(k-1)^{6}}+\frac{w_{k}}{2}\left\lVert W \right\rVert _{2}^2 \eta+\delta_{mask}
\end{equation}
Meanwhile, the upper bound of $((Wf_{mask}(X_{<k}^{i}))^{\top }\mathbbm{1}_{(X_{k}^{i})^{-}})^{2}$ is given by:
\begin{equation}
((Wf_{mask}(X_{<k}^{i}))^{\top }\mathbbm{1}_{(X_{k}^{i})^{-}})^{2}\leq \left\lVert Wf_{mask}(X_{<k}^{i}) \right\rVert _{2}^2\left\lVert \mathbbm{1}_{(X_{k}^{i})^{-}} \right\rVert _{2}^2=1.
\label{equ:ineq2}
\end{equation}
The two upper bounds imply the following result:
\begin{equation}
\begin{split}
\mathcal{L}_{gen}(f_{mask})&=-\mathbb{E}_{(X_{<k}^{i},(X_{k}^{i})^{+})}(Wf_{\text{mask}}(X_{<k}^{i}))^{\top }\mathbbm{1}_{(X_{k}^{i})^{+}}+\mathbb{E}_{X_{<k}^{i},(X_{k}^{i})^{-}}((Wf_{\text{mask}}(X_{<k}^{i}))^{\top }\mathbbm{1}_{(X_{k}^{i})^{-}})^{2}\\
&=-\mathbb{E}_{x_i} \frac{\sum_k (Wf_{\text{mask}}(X_{<k}^{i}))^{\top }\mathbbm{1}_{(X_{k}^{i})^{+}}}{s(1-\rho_m)}+\mathbb{E}_{X_{<k}^{i},(X_{k}^{i})^{-}}((Wf_{\text{mask}}(X_{<k}^{i}))^{\top }\mathbbm{1}_{(X_{k}^{i})^{-}})^{2}\\
&\leq \frac{\sum_{k}^{}\left(\frac{w_{k}^{2}}{(k-1)^{6}}+w_{k}\left\lVert W \right\rVert _{2}^2 \eta\right)}{2s(1-\rho _{m})}+\delta _{mask}+1.
\end{split}
\end{equation}
which finishes the proof.
\end{proof}

\section{Experimental Details}

\subsection{Model details in Table \ref{tab:cls_and_gen}}
\label{appendix-comparison-detail}
In the comparison of GLUE score, the autoregressive model is chosen to be GPT-2 medium (345M parameters) \citep{radford2019language} and the masked model is chosen to be BERT large (340M parameters) \citep{bert}. In the comparison of perplexity, the autoregressive model is chosen to be GPT-2 medium (345M parameters) and the masked model is chosen to be BERT-Large-CAS (395M parameters) \citep{lmtransformer2019}.

\subsection{More details in Figure \ref{estimated connectivity}}
\label{appendix-estimated-detail}
We employ GPT and BERT models that have been trained on a subset of the Pile dataset \citep{gao2020pile} using the Cramming protocol \citep{geiping2023cramming}. Following pretraining, we randomly sample several examples from the remaining Pile dataset and obtain output features using the pretrained models. We choose 8000 features for each model and calculate their similarity by computing the inner product for every pair of features. Subsequently, we sort these similarities and select the top $10^5$ pairs to form Figure \ref{estimated connectivity}.

\subsection{More details on Two-Stream Attention Transformer.}
\label{appendix-two-stream}    
    In our objective of Equation \ref{equation-semi-autoregressive}, we predict several tokens given one conditional sequence. It seems that we can modify the causal mask to suit the dependency relationship between tokens. But in fact, we are not capable of modeling the objective if we use a single stream. This is because of two requirements that are contradictory in a standard decoder-only Transformer architecture, which have been discussed in XLNet \citep{yang2020xlnet}: (1) Suppose the output of the network in position $z_t$ is parameterized by $g_{\theta}(x_{\mathbf{z}<t}, z_t)$. In order to predict the token $x_{z_t}$, the network output in the $z_t$ position in one layer $g_{\theta}(x_{\mathbf{z}<t}, z_t)$ should only use the position $z_t$ and not the content $x_{z_t}$, (2) to predict the other tokens $x_{z_j}$ with $j > t$, $g_{\theta}(x_{\mathbf{z}<t}, z_t)$ should also encode the content $x_{z_t}$ to provide full contextual information. Therefore, it is not enough with only one stream. Since the two-stream attention proposed by XLNet facing the same problem works well, we build upon the two-stream attention mechanism proposed by XLNet and extend it by excavating important usage of the causal masks.

    In the two-stream attention Transformer, the content stream $h_\theta$ encodes the contextual information, and the query stream $g_\theta$ predicts the targets with the help of the content stream. The two stream share the same weights of the attention block but differ in the causal mask. Intuitively, the causal masks in the two streams enable us to easily formulate various dependency relationship between tokens. The main difference for the two causal masks is that the content stream should ensure $x_{z_t}$ can be attended to itself while the query stream is the contrary. The two-stream attention allows us to arbitrarily change the sequential order without confronting the inconsistency as mentioned above. For each self-attention layer $l=1,\dots,L$, the two streams of representations are schematically updated with a shared set of parameters. In the $l$-th layer, the outputs of a self-attention head $\bA_h^{(l)}$ and $\bA_g^{(l)}$ in the two-stream are computed in the form of:
    \begin{equation}
    \begin{aligned}
        \bQ_h&=\bH^{(l-1)}\bW^l_Q,\  \bK_h=\bH^{(l-1)}\bW^l_K,\  \bV_h=\bH^{(l-1)}\bW^l_V\\
        \bQ_g&=\bG^{(l-1)}\bW^l_Q,\  \bK_g=\bH^{(l-1)}\bW^l_K,\  \bV_g=\bH^{(l-1)}\bW^l_V\\
        \bA_h^{(l)}&=\operatorname{softmax}(\frac{\bQ_h\bK_h^{\top}}{\sqrt{d_k}}+\bM_h)\bV_h,
        \bA_g^{(l)}=\operatorname{softmax}(\frac{\bQ_g\bK_g^{\top}}{\sqrt{d_k}}+\bM_g)\bV_g,
    \end{aligned}
    \end{equation}
    where $\bH^{(l-1)}$ and $\bG^{(l-1)}$, the representations before the $(l-1)$-th layer of the content stream and the query stream, are linearly projected to queries, keys and values using shared trainable parameter matrices $\bW^l_Q,\bW^l_K,\bW^l_V$ respectively, and $d_k$ is the dimension of the representations. $\bM_h, \bM_g\in \mathbb{R}^{s\times s}$ are the causal masks in the content stream and the query stream, respectively, which are the key factors in our framework. The value of each element $(\bM)_{ij}$ in either causal mask can only be $0$ or $-\infty$, where $(\bM)_{ij}=0$ means the $j-$th token \textbf{can} be attended to the $i-$th token and $(\bM)_{ij}=-\infty$ means the $j-$th token \textbf{cannot} be attended to the $i-$th token. This indicates that the causal mask has a one-to-one correspondence with the dependency relationship between tokens. Therefore, \textbf{we only need to alter the causal masks $\bM_h$ and $\bM_g$ to adapt to different choices of the group setting, which eliminate the need of using multiple architectures.} We will show how to construct the corresponding attention masks.

    Note that the value of each element $(\bM)_{ij}$ in either causal mask can only be $0$ or $-\infty$, where $(\bM)_{ij}=0$ means the $j-$th token \textbf{can} be attended to the $i-$th token and $(\bM)_{ij}=-\infty$ means the $j-$th token \textbf{cannot} be attended to the $i-$th token. This implicates that by finding out the dependency between each token we can determine the causal mask. We provide the causal masks corresponding to Equation (\ref{equation-semi-autoregressive}): Let $f(t)$ denote the index of the group to which $x_t$ belongs. In this situation, $x_i$ will be dependent on $x_j$ if $f(i)>f(j)$. Besides, tokens in the same group is dependent on each other in the content stream to provide contextual information. The causal masks for the two streams should be
    \begin{equation}
    \label{attention-mask-group}
        (\bM_h)_{ij}=\left\{\begin{array}{ll}
            0 & f(i)\geq f(j)>0 \\
            -\infty & f(j)>f(i)>0
        \end{array}\right.,\ 
        (\bM_g)_{ij}=\left\{\begin{array}{ll}
            0 & f(i)> f(j)>0 \\
            -\infty & f(j)\geq f(i)>0
        \end{array}\right.
    \end{equation}

\section{Ablation Study}
We conduct more ablation experiments on the choice of $t$ in the vision domain. We vary $t$ in $[1,2,3,4]$ and investigate the mixture of $t=1$ and $t=2$. We use ImageNet as the dataset. The results are shown in the Table \ref{tab:ablation-t}. The results indicate that there exists a sweet point between $t=2$ and $t=3$. This means that $t$ should not be too large. The mixture choice of $t$ does not bring much improvement as well.

 \begin{equation}
     \Vert \tilde P_M - F_VF_L^\top \Vert ^2
\end{equation}

\begin{table}[H]
    \centering
    \caption{Ablation study on the choice of $t$ in the diversity-enhanced autoregressive objective.}
    \begin{tabular}{lcc}
    \toprule
        $t$ & Linear probing accuracy & Finetuning accuracy \\
    \midrule
        1 (original autoregressive objective) & 56.2 & 82.5 \\
        2 (used in the previous experiments) & 59.4 & 82.9 \\
        3 & 59.6 & 82.7 \\
        4 & 58.8 & 82.5 \\
        mixture of 1 and 2 & 58.1 & 82.5 \\
    \bottomrule
    \end{tabular}
    \label{tab:ablation-t}
\end{table}

\end{document}